\newcommand{\code}[1]{\texttt{#1}}
\newcommand{\Rone}{\mathbb{R}}
\newcommand{\Rtwo}{\mathbb{R}^2}
\newcommand{\Rthree}{\mathbb{R}^3}
\newcommand{\RthreeByN}{\mathbb{R}^{3 \times N}}
\newcommand{\SOtwo}{SO(2)}
\newcommand{\SOthree}{SO(3)}
\newcommand{\SEthree}{SE(\textrm{3})}
\theoremstyle{definition}
\newtheorem{definition}{Definition}
\newtheorem{theorem}{Theorem}
\newtheorem{lemma}[theorem]{Lemma}
\theoremstyle{remark}
\DeclareMathOperator*{\argmin}{argmin} 
\newcommand{\M}[1]{{\bm #1}} 
\newcommand{\MA}{\M{A}}
\newcommand{\MB}{\M{B}}
\newcommand{\MC}{\M{C}}
\newcommand{\MK}{\M{K}}
\newcommand{\MP}{\M{P}}
\newcommand{\MQ}{\M{Q}}
\newcommand{\MR}{\M{R}}
\newcommand{\MI}{\M{I}}
\newcommand{\MT}{\M{T}}
\newcommand{\vb}{\boldsymbol{b}}
\newcommand{\vc}{\boldsymbol{c}}
\newcommand{\vd}{\boldsymbol{d}}
\newcommand{\vg}{\boldsymbol{g}}
\newcommand{\vn}{\boldsymbol{n}}
\newcommand{\vp}{\boldsymbol{p}}
\newcommand{\vq}{\boldsymbol{q}}
\newcommand{\vu}{\boldsymbol{u}}
\newcommand{\vv}{\boldsymbol{v}}
\newcommand{\vt}{\boldsymbol{t}}
\newcommand{\vxx}{\boldsymbol{x}} 
\newcommand{\vy}{\boldsymbol{y}}
\newcommand{\vzz}{\boldsymbol{z}} 
\newcommand{\normsq}[1]{\left\|#1\right\|^2}
\newcommand{\norm}[1]{\left\| #1 \right\|}
\newcommand{\trace}{\textrm{tr}}
\newcommand{\transposed}{\mathsf{T}}
\newcommand{\Onlogn}{\mathcal{O}(N \log N)}
\newcommand{\MyAlgo}{OptiPose}
\newcommand{\RangeOneToN}{\ensuremath{ \{1, \cdots, N\} }}
\newcommand{\epssq}{\ensuremath{\epsilon^2}}
\definecolor{cvprblue}{rgb}{0.21,0.49,0.74}
\title{Fast globally optimal Truncated Least Squares point cloud registration with fixed rotation axis}
\author{Ivo Ivanov, Carsten Markgraf\thanks{Corresponding author, E-Mail: ivo.ivanov@tha.de}\\
Technical University of Applied Sciences Augsburg, Germany\\
An der Hochschule 1, 86161 Augburg\\
}
\begin{document}
\maketitle
\begin{abstract}
Recent results showed that point cloud registration with given correspondences can be made robust to outlier rates of up to 95\% using the truncated least squares (TLS) formulation.
However, solving this combinatorial optimization problem to global optimality is challenging. Provably globally optimal approaches using semidefinite programming (SDP) relaxations take hundreds of seconds for 100 points. In this paper, we propose a novel linear time convex relaxation as well as a contractor method to speed up Branch and Bound (BnB). 
Our solver can register two 3D point clouds with 100 points to provable global optimality in less than half a second when the axis of rotation is provided. Although it currently cannot solve the full 6DoF problem, it is two orders of magnitude faster than the state-of-the-art SDP solver STRIDE when solving the rotation-only TLS problem.
In addition to providing a formal proof for global optimality, we present empirical evidence of global optimality using adversarial instances with local minimas close to the global minimum.

\end{abstract}    
\section{Introduction}
\label{sec:intro}

Outlier-robust point cloud registration is a fundamental and long-studied problem. It is the core of many applications such as scene reconstruction or localization in autonomous driving \cite{6094720, Yang20tro-teaser}. While available methods solve the problem satisfactorily, they offer no guarantees --  on hard instances with many outliers (e.g. 90\%) or symmetries in the scene they can fail  \cite{9785843}. In case of such a failure, an arbitrarily incorrect solution may be output.
For safety-critical systems such as autonomous vehicles, an attractive approach are globally optimal, also called \textit{certifiable}, algorithms. They are however only useful if they can be made fast enough to run in real-time.


Using keypoint detection and feature matching gives initial point correspondences, making the point cloud registration problem somewhat more tractable. Feature matching results however in many outliers, commonly up to $95\%$.
Robust estimators that can handle such extreme amounts of outliers have been previously proposed, such as the Maximum Consensus (MC) \cite{10161215, Chin2017TheMC} or the Truncated Least Squares (TLS) formulation \cite{yin2023outram}. However, they lead to combinatorial optimization problems that are very difficult to solve to globally optimality \cite[p.299]{Boyd_Vandenberghe_2004}. Branch-and Bound solvers can solve these problems but are usually too slow.

Recently, one promising direction has been to use semidefinite relaxations. It has been shown that the TLS problem can be written as a polynomial optimization problem and then use \textit{Lassere's hierarchy} to find semidefinite convex relaxations \cite{Yang20tro-teaser, Yang2020OneRT, 9785843}. In instances where this relaxation is tight, a global minimizer is obtained. 
Although such an SDP can be solved in polynomial time, the actual runtime is very slow, even with customized solvers such as STRIDE \cite{9785843}, taking hundreds of seconds for 100 points.

On the other hand, heuristic methods for approximately minimizing the TLS objective such as \textit{Graduated Non-Convexity} (GNC) \cite{8957085} or clique finding in a compatibility graph \cite{Yang20tro-teaser, 9562007, 10161215, zhang20233d} have been used with great success. Nevertheless, using such heuristic methods leads to the same problem of having no guarantee about global optimality, despite the overall great robustness to high amounts of outliers.

On the other hand, heuristic methods for solving the TLS objective such as \textit{Graduated Non-Convexity} (GNC) \cite{8957085} or clique finding in a compatibility graph \cite{Yang20tro-teaser, 9562007, 10161215, zhang20233d} have been used with great success. Nevertheless, using such heuristic methods leads to the same problem of having no guarantee about global optimality, despite the overall great robustness to high amounts of outliers.

From a statistical point of view, the TLS and MC problems can be thought of as a robust regression, selecting the best subset of data points. Datapoints are categorized into inliers and outliers based on a fixed threshold on the distance to the model.
In the case of point cloud registration, the model is a rigid transform of $\in \SEthree$ and each data point is a pair of points.

In the \textit{Maximum consensus} (MC) formulation, the objective is to maximize the number of inliers. In the \textit{Truncated Least Squares} (TLS) formulation, the residuals are considered additionally and the objective is to minimize a sum of truncated squared residuals.
Both estimators have been shown to be robust to high amounts of outliers, with TLS yielding slightly better results \cite{Yang20tro-teaser}.

In this paper, we focus on the TLS objective that has the advantage of being Lipschitz-continuous. It can therefore be minimized using Branch-and-Bound. We propose a Branch and Bound solver called \MyAlgo{} that uses a novel linear time convex relaxation based on interval analysis. The relaxation is simple and very inexpensive to compute and minimize. It leads to least-squares problems that are easily minimized using custom active-set solvers derived from existing least-squares solvers. Additionally, we propose an interval contractor method to significantly reduce the search space. Overall, our solver is able to solve the TLS point cloud registration problem in less than half a second, provided the axis of rotation.
Finally, we formally prove global optimality of our solver.


In summary, our main contributions are: 
\begin{itemize}
	\item A convex relaxation for the TLS objective function based interval analysis that can be computed in linear time
	\item Efficient custom active-set solvers for minimizing this convex relaxation
	\item A contractor method that significantly reduces the search space of Branch-and Bound
	\item A simple way to synthetically create adversarial instances with local minimas very close to the global minimum for validating global optimality empirically
\end{itemize}

\section{Related Work}
\label{sec:related-work}

Globally optimal, also called \textit{certifiable}, algorithms have been recently gaining interest in computer vision \cite{10023976, Convex-Relaxations-for-Pose-Graph-Optimization-With-Outliers} and robotics \cite{Planar-globally-optimal-pgo, 7759681, 10.1007/978-3-030-58539-6_18}. Specifically, both robust and outlier-free \cite{8100078, GARCIASALGUERO2023103862} point cloud registration have been studied besides other problems in computer vision \cite{10378410}, but remained impractical for real-time applications due to their long runtime. In the following, we briefly review algorithms for solving the robust point cloud registration problem with known correspondences, using either Truncated Least Squares (TLS) or the closely related Maximum Consensus (MC) formulation. 

\subsection{Heuristic algorithms}
Algorithms based on randomly sampling a subset of correspondences and checking whether they are all inliers, i.e. RANSAC were used for a long time. Torr et al. \cite{TORR2000138} proposed with MLESAC not only to maximize the number of inliers, but also to consider the residuals, an idea similar to the Truncated Least Squares objective. Recently, variants of RANSAC that consider geometric compatibility have been proposed \cite{9052691, 9552513}, improving outlier robustness up to 95\%.
Yang et al. \cite{Yang2020OneRT} presented TEASER++, a heuristic algorithm based on the TLS objective that provides many algorithmic insights and popularized the idea of finding cliques in compatibility graphs. They first construct a graph of compatible measurements and then prune outliers by finding the maximum clique. They then apply a dimensionality decomposition idea where they first solve for rotation using \textit{Graduated Non-Convexity} (GNC) and then separately over different dimensions of the translation using \textit{adaptive voting}. Separately from this heuristic algorithm, they propose a second globally optimal algorithm that uses semidefinite relaxations. However, it can only prove global optimality (i.e. \textit{certify}) for one subproblem at a time, i.e. rotation or translation, not the full 3D-transform. Finally, for global optimality they assume that an instance is not "too hard", i.e. they make assumptions unverifiable a priori  about an instance [Assump. 2, 3 in Thm.17]\cite{Yang20tro-teaser}. 
Other optimization methods for minimizing the TLS objective have been proposed, but they are either only heuristic, such as \cite{Barratt2020}, or not practical for higher dimensions \cite{doi:10.1080/10618600.2017.1390471}.

Methods using compatibility graphs are popular for point cloud registration \cite{Yang20tro-teaser, SC2-PCR-Chen-2022-CVPR, 10161215, lim2024kissmatcherfastrobustpoint} and outlier removal \cite{7410607, 10091912, zhang20233d}. They either list all maximal cliques, or search for the largest maximal clique (the \textit{maximum} clique). Some approaches such as ROBIN \cite{9562007} or CLIPPER \cite{10432947} approximate the maximum clique.
Recently, the concept of dimensionality decomposition has gained popularity. The idea is to decompose the 6DoF problem into rotation and translation \cite{Yang2020OneRT, 9485090}, or even more subproblems \cite{9878458, 10656079}, which are then solved sequentially.
However, solving these subproblems sequentially (even to global optimality) has not been proven to be equivalent to solving the original optimization problem -- as recently hinted, it very likely breaks global optimality \cite{9878458}.

\subsection{Exact algorithms}

Exact methods that guarantee to find the global optimum are mostly based on Branch-and-Bound (BnB) or semidefinite relaxations.  Branch-and-Bound algorithms have been proposed for solving the maximum consensus problem over rotations \cite{4408896, 10.1007/978-3-642-37444-9_42}. They usually have been slow however, even when searching over a 2D rotation \cite{9447984} using the nested BnB approach.

Another approach is to use tight convex relaxations based on Semidefinite Programs (SDPs). In their follow-up work to TEASER++, Yang et al. \cite{9785843} propose a sparse SDP relaxation for (among other problems) the full 6-DoF TLS point cloud registration problem and also a fast rank-1 SDP solver named \textit{STRIDE}. They can solve synthetic instances with $N=100$ points to global optimality with up to 80\% outlier rate, but this requires 200 seconds. 
SDPs have been used for similar problems such as multiple point cloud registration \cite{9157383}, but again with large runtimes.
One promising approach to solve large-scale SDP relaxations efficiently is the Burer-Monteiro approach \cite{papalia2024overviewburermonteiromethodcertifiable}. An extension of this method using Riemannian optimization named \textit{Riemannian staircase} lead to efficient and globally optimal algorithms for pose-graph optimization \cite{Rosen2017SESync, 10.1007/978-3-030-58539-6_18}. These problems are however outlier-free -- the Burer-Monteiro approach has not yet been successfully applied to problems with outliers \cite{papalia2024overviewburermonteiromethodcertifiable}.

\section{Problem formulation}
The problem is to compute a 3D-transform $(\MR^*, \vt^*) \in \SEthree$ that optimally aligns two point sequences $\mathcal{P} = \left(\vp_1, ..., \vp_N\right), \mathcal{Q} = \left(\vq_1, ..., \vq_N\right) \in \RthreeByN$ of noisy measurements. We know the correspondences between points ($\vp_i$ and $\vq_i$), but a large fraction (up to 95\%) can be outliers, as it is typical for feature-based matching methods.

\begin{figure}[!ht]
	\centering
	\begin{adjustbox}{width=1.\linewidth}
		\input{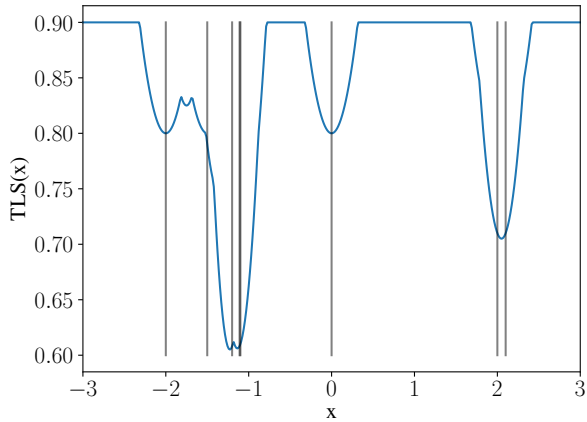}
	\end{adjustbox}
	\caption{The non-convex Truncated least-squares (TLS) function has multiple local minima, making global minimization difficult. Shown is the function  $\sum_{i=1}^{N}\min((x - y_i)^2, \epsilon^2)$ for $\epsilon^2=0.1$, and nine $y_i$'s (gray lines).}
	\label{fig:tlscostmulterm}
\end{figure}

To make the least squares estimation robust against those outliers, we use the Truncated Least Squares (TLS) formulation. In this formulation, residuals are truncated above a fixed threshold $\epsilon^2$. The point cloud registration problem in the TLS formulation is therefore:

\begin{equation}
	\label{eq:pcr-tls}
	\begin{aligned}
		(\MR^*, \vt^*) =  \argmin_{(\MR, \vt) \in \mathcal{R} \times \Rthree }  \sum_{i=1}^{N} \min \left(\normsq{\MR \vp_i  - \vq_i  + \vt}, \epsilon^2 \right)
	\end{aligned}
\end{equation}

In this paper, we optimize over 3D-rotations with fixed rotation axis $\vn^* \in \mathcal{S}^2$:
\begin{equation}
	\label{eq:fixed-axis-3d-transform}
	\begin{aligned}
		\mathcal{R} = \{ \exp( \theta \, [\vn^*]_{\times} ): \theta \in [-\pi, \pi] \}
	\end{aligned}
\end{equation}
where $\exp( \theta \, [\vn^*]_{\times} )$ maps the axis $\vn^*$ and angle $\theta$ to the corresponding 3D rotation matrix.

Intuitively, the parameter $\epsilon$ is the known measurement accuracy of each point. It can be estimated by collecting samples, i.e. in a data-driven manner \cite[p.8]{Chin2017TheMC}.

\begin{figure}[!ht]
	\centering
	\begin{adjustbox}{width=1.\linewidth}
		\includegraphics{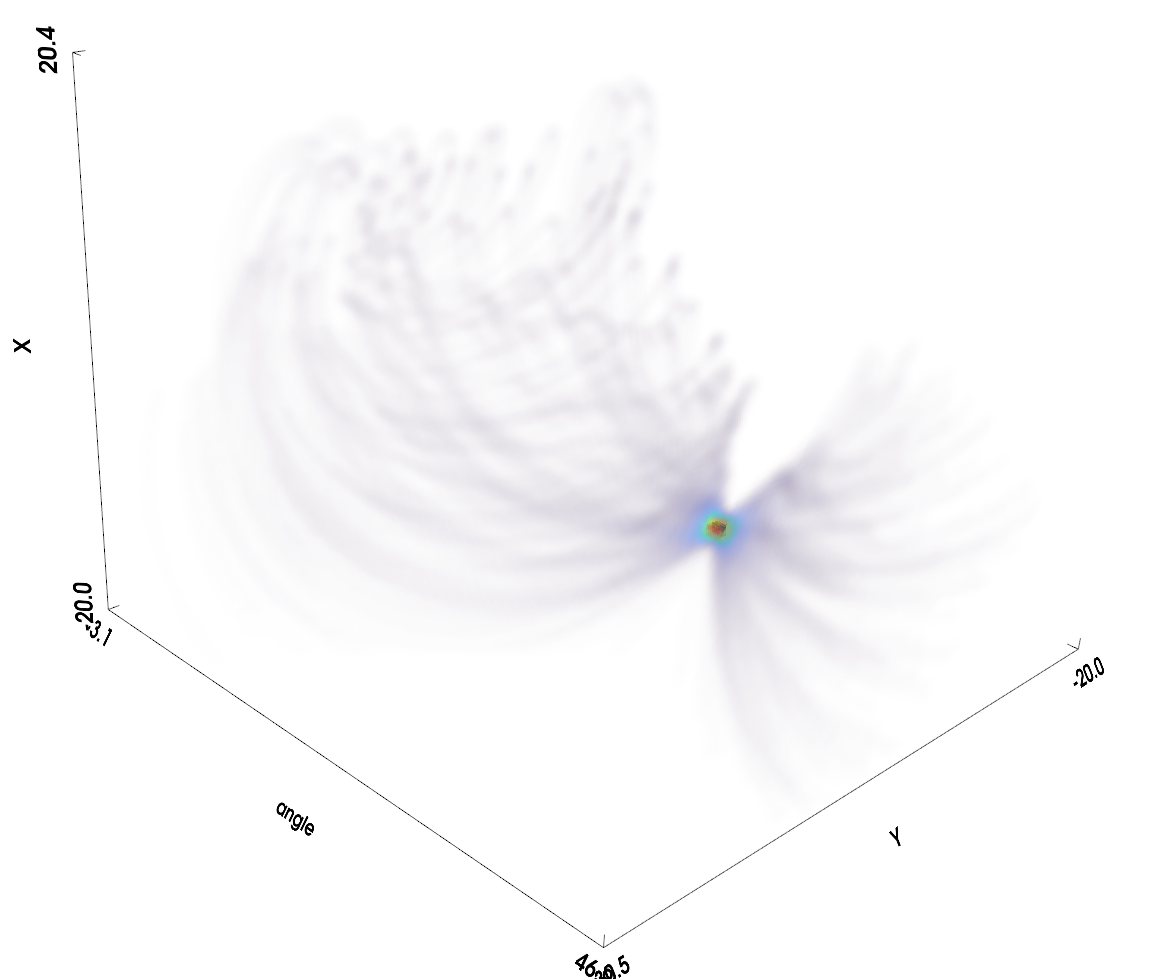}
	\end{adjustbox}
	\caption{The Truncated Least Squares objective function over $SE(2)$ (2D point cloud registration problem)}
	\label{fig:tls-objective}
\end{figure}
An example for the TLS objective is shown in Fig. \ref{fig:tlscostmulterm}.

Among the many ways to make least squares regression robust to outliers \cite[Ch. 3]{elements-of-stats-learning-book}, the TLS formulation has recently gained popularity \cite{Yang20tro-teaser, 9785843, NIPS2017_9f53d83e, doi:10.1080/10618600.2017.1390471, NIPS2010_01882513} since it has been shown to be very robust even to high outlier rates of 90\% \cite{Yang20tro-teaser}.
Solving the TLS problem however is challenging since it is a non-convex combinatorial problem \cite{5459398}.

\section{\MyAlgo{}: A fast BnB solver for the TLS registration}
In this section, we propose a fast Branch-and-Bound solver \MyAlgo{} for solving the TLS registration problem (\ref{eq:pcr-tls}).
It is a standard BnB algorithm with the main novel components being a linear time convex  relaxation as well as a contractor for reducing the search space.
\subsection{WLS Relaxation: A simple linear time relaxation}
In this section we propose a novel convex relaxation for the Truncated Least Squares problem that can be computed in linear time. Our relaxation can also be minimized easily by solving a weighted least squares problem, requiring only linear time.
We use this convex relaxation to compute a lower bound on the objective.

\begin{figure}[!ht]
	\centering
	\caption{The idea of the WLS convex relaxation: We compute the maximum $r_{max}$ of each convex term (orange) over the given interval (blue) and then re-weight it such that it remains below $\epsilon^2$ over this interval and is thus not truncated over the entire interval (green).}
	\label{fig:wlsrelaxideawithrelax}
\end{figure}

The idea behind the WLS relaxation is that the TLS objective is only non-convex due to the truncation, the residuals themselves are convex. Since the sum of convex functions is convex, we can relax each residual individually to make it convex.
This relaxation is computed for given subspaces of the search space, that is the current Branch and Bound node.
These nodes are generally balls parameterized by a center and radius. The relaxation is only valid (i.e., it underestimates the TLS objective) over these balls. 

To relax each residual, we first perform an interval analysis on it residual over the given ball. For example, consider the residual $r_i = \normsq{ \MR \vp_i - \vq_i }$. We first find the minimum and maximum of this residual over a ball of rotations, we denote them as $r_{min}^i$ and $r_{max}^i$ respectively. Then, we reweight these residuals so  that their maximum becomes $\epsilon^2$ and thus they are not truncated over the entire ball (Fig. \ref{fig:wlsrelaxideawithrelax}).
This leads to the following theorem:

\begin{theorem}
	\label{thm:wls-relax}
	The Truncated Least Squares objective is defined generally as
	
	\begin{equation}
		\label{eq:tls-t-opt-branch2}
		\begin{aligned}
			\text{TLS}(\vxx) = \sum_{i=1}^{N} \min \left(r(\vxx, \vxx_i)^2, \epsilon^2 \right)\\
		\end{aligned}
	\end{equation}
	
	where $\vxx_i$ are data points and $r(\vxx, \vxx_i)^2 = r_i$ are residual terms convex in $\vxx$. Given are the minimum and maximum value of each residual term $r_i$ over every $\vxx \in \mathcal{X}$, denoted as $r_{min}^i$ and $r_{max}^i$ respectively. Then, the following is a convex relaxation of (\ref{eq:tls-t-opt-branch2}) over $\mathcal{X}$:
	\begin{equation}
		\label{eq:tls-wls-relaxation}
		\begin{aligned}	
			\sum_{i=1}^{N} (1 - o_i) \left(w_i r_i  + (1 - w_i) r_{min}^i \right) + \sum_{i=1}^{N} o_i\epsilon^2\\
		\end{aligned}
	\end{equation}
	
	with the weights $w_i$ defined as:
	\begin{equation}
		\begin{aligned}	
			w_i  = 
			\begin{cases}
				\frac{\epsilon^2 - r_{min}^i}{r_{max}^i - r_{min}^i} &  r_{max}^i > \epsilon^2\\
				1 & \, \text{otherwise}
			\end{cases}
		\end{aligned}
	\end{equation}
	
	and the binary variables $o_i$ deciding whether the $i$-th residual is an outlier or not:
	\begin{equation}
		\begin{aligned}	
			o_i &= \begin{cases}
				1 &  r_{min}^i > \epsilon^2\\
				0 & \, \text{otherwise}
			\end{cases}
		\end{aligned}
	\end{equation}
	
\end{theorem}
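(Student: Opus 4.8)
The plan is to verify the two defining properties of a convex relaxation separately: that the expression in (\ref{eq:tls-wls-relaxation}) is (i) convex in $\vxx$ and (ii) a pointwise lower bound on $\text{TLS}(\vxx)$ for every $\vxx \in \mathcal{X}$. The crucial observation for (i) is that the scalars $r_{min}^i$, $r_{max}^i$, $w_i$, and $o_i$ are all \emph{constants} on a fixed node $\mathcal{X}$: they depend on $\mathcal{X}$ but not on the variable $\vxx$. Convexity is then immediate. The term $\sum_i o_i \epsilon^2$ contributes nothing to curvature; in each summand of the first sum, whenever $o_i = 1$ the factor $(1 - o_i)$ annihilates it, and whenever $o_i = 0$ it equals $w_i r_i + (1 - w_i) r_{min}^i$, which is a nonnegatively scaled ($w_i \ge 0$) copy of the convex residual $r_i$ plus a constant, hence convex. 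A sum of convex functions is convex, so (i) holds.

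For the lower bound (ii) I would argue termwise, splitting into the three regimes determined by the position of $\epsilon^2$ relative to the residual range $[r_{min}^i, r_{max}^i]$. If $r_{max}^i \le \epsilon^2$ the residual is never truncated on $\mathcal{X}$, and since then $o_i = 0$, $w_i = 1$, the summand reduces to $r_i = \min(r_i, \epsilon^2)$. If $r_{min}^i > \epsilon^2$ the residual is always truncated, $o_i = 1$, and the summand equals $\epsilon^2 = \min(r_i, \epsilon^2)$. Both extreme cases give exact equality. The interesting case is $r_{min}^i \le \epsilon^2 < r_{max}^i$, where $o_i = 0$ and $w_i \in [0,1)$, and I must show $g_i := w_i r_i + (1 - w_i) r_{min}^i \le \min(r_i, \epsilon^2)$ on $\mathcal{X}$.

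The key step, and the main (though modest) obstacle, is this mixed case, which amounts to recognizing that $g_i$ is exactly the secant line (chord) of the concave map $t \mapsto \min(t, \epsilon^2)$ joining its values at $t = r_{min}^i$ and $t = r_{max}^i$. I would verify the two elementary inequalities $g_i \le r_i$ and $g_i \le \epsilon^2$ directly: the first follows from $r_i - g_i = (1 - w_i)(r_i - r_{min}^i) \ge 0$ since $w_i \le 1$ and $r_i \ge r_{min}^i$ on $\mathcal{X}$; the second follows because $g_i$ is nondecreasing in $r_i$ (as $w_i \ge 0$) and $w_i$ is calibrated precisely so that $g_i = \epsilon^2$ at $r_i = r_{max}^i$, whence $r_i \le r_{max}^i$ gives $g_i \le \epsilon^2$. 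Together these yield $g_i \le \min(r_i, \epsilon^2)$. Summing the termwise bounds over $i$, and using that $r_i(\vxx) \in [r_{min}^i, r_{max}^i]$ for all $\vxx \in \mathcal{X}$ by the definition of the interval bounds, establishes that the relaxation is $\le \text{TLS}(\vxx)$ on $\mathcal{X}$, which together with (i) completes the verification.
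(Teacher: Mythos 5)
Your proof is correct and follows essentially the same route as the paper's: termwise convexity of the nonnegatively weighted residuals plus constants, and the calibration of $w_i$ so that the affine map sends $[r_{min}^i, r_{max}^i]$ into $[r_{min}^i, \epsilon^2]$, yielding both $g_i \le r_i$ and $g_i \le \epsilon^2$. Your explicit three-case split (always-inlier, always-outlier, mixed) is in fact slightly more careful than the paper's proof, which treats the two extreme cases only implicitly, but the key idea is identical.
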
	

A proof is given in the Appendix \ref{proof:wls-relax}.

The main computational step is computing the weights.
Additional case handling is required to ensure that the relaxation becomes tight as the ball radius approaches zero. This is crucial to ensure that BnB converges in a finite number of iterations. First, all residuals with  $r_{min} \ge \epsilon^2$ are outliers over the entire interval and therefore constant. Similarly, residuals with $r_{max} \leq \epsilon^2$ are inliers over the entire interval and do not require reweighting.

\subsection{Interval analysis of different residuals}

In the following we show how the interval $[r_{min}, r_{max}]$ can easily be computed analytically for different models (i.e. translation and rotation), given that the subspaces are balls.

\begin{theorem}
	\label{thm:interval-analysis-tranlation}
	Given a ball $\MB^t := \{ \vt \in \Rthree : \norm{ \vc - \vt } \leq n_r \}$, the residual $\normsq{ \vp - \vq + \vt } \in [r_{min}, r_{max}] $ for all $\vt \in \MB^t$, where 
	\begin{equation}
		\label{eq:t-interval}
		\begin{aligned}
			r_{min} &= \max(0, \norm{ \vd } - n_r)^2\\
			r_{max} &= (\norm{ \vd } + n_r)^2\\
			\vd &= \vp - \vq + \vc
		\end{aligned}
	\end{equation}
\end{theorem}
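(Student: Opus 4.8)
The plan is to reparameterize the ball by its offset from the center, which reduces the claim to a one-dimensional statement about the scalar $\norm{\vd + \vu}$. First I would write an arbitrary $\vt \in \MB^t$ as $\vt = \vc + \vu$ with $\norm{\vu} \leq n_r$. Substituting into the residual and invoking the definition $\vd = \vp - \vq + \vc$ gives $\normsq{\vp - \vq + \vt} = \normsq{\vd + \vu}$. This turns the problem into determining the range of $\normsq{\vd + \vu}$ as $\vu$ ranges over the closed ball of radius $n_r$ about the origin, i.e.\ as $\vd + \vu$ ranges over the closed ball of radius $n_r$ centered at $\vd$.

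Next I would bound $\norm{\vd + \vu}$ using the triangle inequality in both directions. The forward inequality gives $\norm{\vd + \vu} \leq \norm{\vd} + \norm{\vu} \leq \norm{\vd} + n_r$, and the reverse triangle inequality gives $\norm{\vd + \vu} \geq \norm{\vd} - \norm{\vu} \geq \norm{\vd} - n_r$. Since a norm is nonnegative, the sharpened lower bound is $\max(0, \norm{\vd} - n_r)$. Because $s \mapsto s^2$ is monotonically increasing on $[0, \infty)$, squaring these two bounds produces exactly the claimed $r_{min} = \max(0, \norm{\vd} - n_r)^2$ and $r_{max} = (\norm{\vd} + n_r)^2$.

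To confirm that these bounds are attained (so the stated interval is tight, not merely an enclosure), I would exhibit the extremizing $\vu$. Assuming $\vd \neq \boldsymbol{0}$, the choice $\vu = n_r \vd / \norm{\vd}$ makes $\vd$ and $\vu$ parallel and achieves $\norm{\vd + \vu} = \norm{\vd} + n_r$, realizing $r_{max}$; the choice $\vu = -n_r \vd / \norm{\vd}$ achieves $\norm{\vd + \vu} = |\norm{\vd} - n_r|$, which realizes $r_{min}$ when $\norm{\vd} \geq n_r$. If instead $\norm{\vd} < n_r$, then the origin lies in the interior of the shifted ball, and $\vu = -\vd$ (which satisfies $\norm{\vu} < n_r$) gives residual $0 = r_{min}$. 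The degenerate case $\vd = \boldsymbol{0}$ is immediate, since then $\normsq{\vd + \vu} = \normsq{\vu}$ attains every value in $[0, n_r^2]$, matching the formulas.

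There is no genuine obstacle here; the single point requiring care is the lower bound when the origin lies inside the shifted ball, which is precisely why the truncation $\max(0, \cdot)$ appears in $r_{min}$. Since the residual is continuous and $\MB^t$ is connected, the attained minimum and maximum together with the intermediate value theorem also guarantee that the residual sweeps out the entire interval $[r_{min}, r_{max}]$, completing the argument.
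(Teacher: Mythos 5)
Your proof is correct and takes essentially the same route as the paper: both reparameterize $\vt$ by its offset from the ball center $\vc$, reduce the residual to $\normsq{\vd \pm \vu}$ with $\norm{\vu} \leq n_r$, and then bound the distance from the point $-\vd$ (resp. $\vd$) to a ball of radius $n_r$ — the paper does this by expanding the square and optimizing the resulting quadratic, while you use the triangle and reverse triangle inequalities plus monotonicity of squaring. Your additional verification of attainment (the explicit extremizers, the $\norm{\vd} < n_r$ and $\vd = \boldsymbol{0}$ cases) and the intermediate-value argument go beyond what the stated containment requires, but they establish the tightness that the paper's proof asserts only implicitly.
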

See App. \ref{proof:tls-relax-minmax-r} for a proof.

Similarly, for the rotation residual $\normsq{ \MR \vp - \vq }$ over a ball with center $\MR_c$ and (geodesic) radius $n_r$ we have:

\begin{theorem}
	\label{thm:interval-analysis-rotation}
	Given a ball $\MB^r := \{ \MR \in \SOthree : d_{\angle}(\MR, \MR_c) \leq n_r\}$, the residual $\normsq{ \MR \vp - \vq } \in [r_{min}, r_{max}]$ for all 
	$\MR \in \SOthree \in \MB^r$, where
	\begin{equation}
		\label{eq:r-interval}
		\begin{aligned}
			r_{min} &= \normsq{\vp} + \normsq{\vq} - 2 \norm{\vp} \, \norm{\vq} \cos (\max(\theta - n_r, 0))\\
			r_{max} &= \normsq{\vp} + \normsq{\vq} - 2 \norm{\vp} \, \norm{\vq} \cos (\min(\theta + n_r, \pi))
		\end{aligned}
	\end{equation}
	and
	\begin{equation}
		\begin{aligned}
			\theta &= \arccos \left( \frac{ \vp^\transposed \MR_c^\transposed \vq}
			{\norm{\vp} \, \norm{\vq} } \right)
		\end{aligned}
	\end{equation}
	
\end{theorem}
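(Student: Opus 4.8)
The plan is to reduce the whole statement to a purely angular question on the unit sphere. First I would expand the residual using the orthogonality of $\MR$: since $\normsq{\MR\vp} = \normsq{\vp}$,
\[
\normsq{\MR\vp - \vq} = \normsq{\vp} + \normsq{\vq} - 2(\MR\vp)^\transposed\vq = \normsq{\vp} + \normsq{\vq} - 2\norm{\vp}\,\norm{\vq}\cos\alpha,
\]
where $\alpha := \angle(\MR\vp,\vq)$ is the angle between the rotated vector and $\vq$. The only dependence on $\MR$ enters through $\alpha$, and since $\cos$ is strictly decreasing on $[0,\pi]$ (and the coefficient $-2\norm{\vp}\norm{\vq}$ is negative for $\vp,\vq\neq 0$), the residual is strictly increasing in $\alpha$. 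Hence it suffices to find the range of $\alpha$ as $\MR$ ranges over $\MB^r$, and then $r_{min}$, $r_{max}$ follow by substituting the extreme angles into this formula.

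Next I would characterise the set of achievable directions of $\MR\vp$. Writing $\MR = \Delta\MR_c$ with $\Delta := \MR\MR_c^\transposed$, the constraint $d_\angle(\MR,\MR_c)\le n_r$ says exactly that $\Delta$ is a rotation of angle at most $n_r$, and $\MR\vp = \Delta(\MR_c\vp)$. So the direction of $\MR\vp$ is the fixed direction $\hat\vv := \MR_c\vp/\norm{\vp}$ rotated by a rotation of angle $\le n_r$. The key lemma is that such rotations sweep $\hat\vv$ over precisely the spherical cap of angular radius $n_r$ about $\hat\vv$. Both inclusions I would establish via Rodrigues' formula: for a rotation $\Delta$ of angle $\beta$ about unit axis $\hat\vn$, a direct computation gives $\hat\vv^\transposed\Delta\hat\vv = \cos\beta + (1-\cos\beta)(\hat\vn^\transposed\hat\vv)^2$, whence $\angle(\hat\vv,\Delta\hat\vv)\le\beta\le n_r$ (the displacement never exceeds the rotation angle); conversely, any target $\hat\vu$ with $\angle(\hat\vv,\hat\vu)=\delta\le n_r$ is reached by the rotation of angle $\delta$ about the axis $\hat\vn\propto\hat\vv\times\hat\vu$, which lies in $\MB^r$. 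Thus the directions of $\MR\vp$ fill exactly this cap.

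Then I would minimise and maximise $\alpha=\angle(\,\cdot\,,\hat\vq)$ over the cap. Observe that $\theta$ as defined is precisely the angular distance from the cap's center $\hat\vv$ to $\hat\vq$, since $\cos\theta = (\MR_c\vp)^\transposed\vq/(\norm{\vp}\norm{\vq})$ and $\norm{\MR_c\vp}=\norm{\vp}$. The geodesic through $\hat\vv$ and $\hat\vq$ realizes both extremes: moving from $\hat\vv$ toward $\hat\vq$ by up to $n_r$ attains the minimal angle $\max(\theta-n_r,0)$ (clamped at $0$ once the cap engulfs $\hat\vq$), and moving away attains the maximal angle $\min(\theta+n_r,\pi)$ (clamped at $\pi$ at the antipode of $\hat\vq$). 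Formally this is the triangle inequality $|\theta-\delta|\le\angle(\MR\vp,\vq)\le\theta+\delta$ for the geodesic metric on the sphere, optimised over $\delta\in[0,n_r]$, together with attainability from the cap characterisation. Substituting $\alpha_{min}=\max(\theta-n_r,0)$ and $\alpha_{max}=\min(\theta+n_r,\pi)$ into the residual formula yields exactly the claimed $r_{min}$ and $r_{max}$.

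I expect the main obstacle to be the tightness half of the cap characterisation in the second step, namely showing that \emph{every} direction within angular distance $n_r$ of $\hat\vv$ is genuinely attained by some $\MR\in\MB^r$, not merely that displacements are bounded. The Rodrigues identity settles both the bound and the explicit axis $\hat\vn\propto\hat\vv\times\hat\vu$ needed for attainability, so the argument becomes routine once that identity is established; I would additionally dismiss the degenerate cases $\vp=\mathbf 0$ or $\vq=\mathbf 0$, where the residual is constant and $\theta$ is undefined, as trivial.
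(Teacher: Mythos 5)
Your proof is correct and takes essentially the same route as the paper's: both factor out $\MR_c$ to reduce the constraint to a relative rotation of angle at most $n_r$ acting on $\vp$, expand the squared norm via isometry into an inner-product term, and reduce the problem to extremizing the angle between the rotated vector and a fixed vector. The only difference is completeness: where the paper compresses the final step into ``the claim follows'' from the definition of geodesic distance, you supply the missing details---the spherical-cap characterization via Rodrigues' formula (including the attainability direction) and the spherical triangle inequality---so your write-up is, if anything, more rigorous than the paper's own.
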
 
See App. \ref{proof:wls-interval-analysis-rotation} for a proof.

Computing these intervals requires constant time and is in practice very fast since the trigonometric functions can be eliminated.

\textbf{3D Transform}. To compute the interval for the residual $\normsq{ \MR \vp - \vq  + \vt}$ over a rotation and translation ball, note that the translation interval Eq. \ref{eq:t-interval} only depends on the norm $\norm{\vd}$. Therefore, we can simply substitute $ \MR \vp - \vq$ for $\vd$ to compute the interval accordingly. 

\subsection{Minimizing the WLS convex relaxation}
After computing the weighted least squares (WLS) relaxation, we need to minimize it to obtain a lower bound on the TLS objective.
Since the relaxation is a weighted least squares function, it can be minimized using existing least squares solvers. However, the solution must be within the given Branch and Bound node. Therefore, we must introduce a ball constraint into the minimization problem.
This constrained problem can be solved efficiently using custom active-set solvers, which we present in the following.

\textbf{Minimizing the WLS-relaxation over rotations}. The problem of minimizing the WLS relaxation over a geodesic ball of rotations $\{\MR \in \SOthree:  d_{\angle}(\MR_c, \MR) \leq n_r\}$ is:

\begin{equation}
	\label{eq:tls-rwls}
	\begin{aligned}
		\argmin_{\MR \in \SOthree}  \quad &\sum_{i=1}^{N} w_i \normsq{ \MR\vp_i - \vq_i } \\
		\text{subject to} \quad &d_{\angle}(\MR_c, \MR) \leq n_r
	\end{aligned}
\end{equation}

where $\MR_c$ is the ball center and $n_r$ is the radius, as in the translation case. $d_{\angle}(\cdot)$ is the geodesic (also called angular) distance.
We can write this problem equivalently as:
\begin{equation}
	\label{eq:tls-rwls-centered}
	\begin{aligned}
		\argmin_{\Delta \MR \in \SOthree}  \quad &\sum_{i=1}^{N} w_i \normsq{ \Delta\MR \vp_i - \MR_c^\transposed \vq_i } \\
		\text{subject to} \quad &d_{\angle}(\MI, \Delta\MR) \leq n_r
	\end{aligned}
\end{equation}

where $\MR = \MR_c\Delta\MR$, $\Delta\MR$ is the rotation relative to the node's center and $\MI$ is the identity rotation.
The unconstrained version of this problem is known as the \textit{Wahba-problem} for which a simple SVD-algorithm exists \cite{Kabsch-1978-Point-set-alignment, Lawrence2019APA, Least-squares-estimation-point-sets-Umeyama-1991}. 
We can solve the constrained problem efficiently as well using a custom active-set solver.

The idea of the active set method is based on the observation that every inequality constraint can either be ignored or is satisfied as an equality \cite[p.424]{Numerical-Optimization-Nocedal-Wright}. We therefore first solve the problem using the SVD algorithm and thereby ignore the constraint. Then, we simply check whether the constraint is already satisfied. If yes, we have found the solution. If not, the constraint is satisfied as an equality and we need to find a 3D rotation that has a rotation angle equal to $n_r$. According to the Euler theorem, every 3D rotation can be described as a rotation around a single axis and a single angle. Since we know that the rotation angle must be $n_r$, we only need to find the rotation axis.

These two steps can be performed efficiently in practice since the second step is based on the same $3 \times 3$ cross-correlation matrix as the first step (the SVD algorithm). 
In the following, we elaborate on how we solve the second step, i.e. the equality-constrained case.

\begin{theorem}
	\label{thm:constrained-wahba-davenport}
	The optimal axis of rotation $\vn^*$ of the following least-squares rotation estimation problem 
	\begin{equation}
		\label{eq:tls-rwls-a}
		\begin{aligned}
			\argmin_{\MR \in \SOthree}  \quad &\sum_{i=1}^{N} w_i \normsq{ \MR \vp_i -  \vq_i } \\
			\text{subject to} \quad &d_{\angle}(\MI, \MR) = n_r
		\end{aligned}
	\end{equation}
	where the angle of rotation is constrained to be equal to $n_r$ is obtained equivalently by solving the following optimization problem:
	\begin{equation}
		\label{eq:rot-est-constrained-qcqp}
		\begin{aligned}
			\vn^* = \argmin_{\vn \in \Rthree} \quad &-\left(\vn^\transposed \MA \vn + 2 \vg^\transposed \vn \right)\\
			\text{subject to} \quad &|| \vn|| = 1\\
		\end{aligned}
	\end{equation}
\end{theorem}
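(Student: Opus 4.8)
The plan is to reduce the constrained least-squares problem to Davenport's trace-maximization form and then exploit the fixed-angle constraint to collapse the usual quaternion search onto the unit sphere of axes. First I would expand the objective and drop the terms that are invariant under rotation. Since $\MR \in \SOthree$ is orthogonal, $\normsq{\MR\vp_i} = \normsq{\vp_i}$, so $\sum_i w_i \normsq{\MR\vp_i - \vq_i} = \sum_i w_i(\normsq{\vp_i} + \normsq{\vq_i}) - 2\sum_i w_i \vq_i^\transposed \MR \vp_i$. The first sum is constant in $\MR$, and using $\vq_i^\transposed \MR \vp_i = \trace(\MR \vp_i \vq_i^\transposed)$ the second sum equals $\trace(\MR \MM)$ with the cross-correlation matrix $\MM = \sum_{i=1}^N w_i \vp_i \vq_i^\transposed$. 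Thus minimizing (\ref{eq:tls-rwls-a}) over the constraint set is equivalent to maximizing $\trace(\MR\MM)$, i.e. to minimizing $-\trace(\MR\MM)$.

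Next I would parameterize the constraint manifold $\{\MR \in \SOthree : d_{\angle}(\MI,\MR) = n_r\}$. By Euler's theorem every such rotation is a rotation by the fixed angle $n_r$ about some unit axis $\vn \in \mathcal{S}^2$, given explicitly by Rodrigues' formula $\MR = \MI + \sin(n_r)\,[\vn]_{\times} + (1-\cos n_r)\,[\vn]_{\times}^2$. Substituting this into the trace and using linearity gives $\trace(\MR\MM) = \trace(\MM) + \sin(n_r)\trace([\vn]_{\times}\MM) + (1-\cos n_r)\trace([\vn]_{\times}^2 \MM)$. I would then evaluate the two $\vn$-dependent traces with standard skew-matrix identities. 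Writing $\MM = \MM_s + \MM_a$ as its symmetric and antisymmetric parts, and letting $\va$ be the axial vector of $\MM_a$ (so $[\va]_{\times} = \tfrac12(\MM - \MM^\transposed)$), one has $\trace([\vn]_{\times}\MM) = -2\,\vn^\transposed \va$, which is linear in $\vn$. For the quadratic term I would use $[\vn]_{\times}^2 = \vn\vn^\transposed - \MI$ on the sphere, giving $\trace([\vn]_{\times}^2 \MM) = \vn^\transposed \MM_s \vn - \trace(\MM)$.

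Collecting terms, $\trace(\MR\MM) = (1-\cos n_r)\,\vn^\transposed \MM_s \vn - 2\sin(n_r)\,\vn^\transposed \va + \text{const}$, where the constant is independent of $\vn$. Minimizing $-\trace(\MR\MM)$ over $\vn \in \mathcal{S}^2$ therefore amounts to minimizing $-\bigl(\vn^\transposed \MA \vn + 2\vg^\transposed \vn\bigr)$ subject to $\norm{\vn} = 1$, with the identifications $\MA = (1-\cos n_r)\,\MM_s$ and $\vg = -\sin(n_r)\,\va$. This is exactly (\ref{eq:rot-est-constrained-qcqp}), and the optimal rotation is recovered from the minimizer $\vn^*$ through Rodrigues' formula with angle $n_r$. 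I would emphasize that $\MA$ and $\vg$ are built from the same $3\times 3$ cross-correlation matrix already used by the SVD step, consistent with the efficiency claim made before the theorem.

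The step I expect to require the most care is the constraint parameterization rather than the trace algebra. To claim equivalence of the two $\argmin$ problems I must argue that the map $\vn \mapsto \MR(\vn)$ surjects onto the constraint set, so that no feasible rotation (hence no candidate minimizer) is omitted; for $0 < n_r < \pi$ this map from $\mathcal{S}^2$ is onto, with the only redundancy the antipodal identification $(\vn, n_r)\sim(-\vn,-n_r)$ that is harmless because it yields the same feasible set. I would also flag the degenerate endpoints: $n_r = 0$ forces $\MR = \MI$ (so $\MA = \M{0}$, $\vg = \M{0}$ and the constraint is vacuous), while $n_r = \pi$ makes $\sin(n_r) = 0$ so $\vg = \M{0}$ and the antipodal pair $\pm\vn$ give the same rotation; both are consistent with the stated QCQP and should be noted so the equivalence holds uniformly over $n_r \in (0,\pi]$.
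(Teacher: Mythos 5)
Your proof is correct, but it takes a genuinely different route from the paper's. The paper invokes Davenport's quaternion formulation of the Wahba problem: it writes the objective as a quadratic form $\vq^\transposed \MK \vq$ over unit quaternions, substitutes the axis--angle quaternion $\vq = (\sin(n_r/2)\,\vn, \cos(n_r/2))^\transposed$, notes that with the angle fixed the term $\cos^2(n_r/2)\trace(\MC)$ is constant, and reads off $\MA = \sin^2(n_r/2)\,\MC$ and $\vg = \sin(n_r/2)\cos(n_r/2)\,\vzz$, where $\MC = \MB + \MB^\transposed$, $\MB = \sum_i w_i \vp_i\vq_i^\transposed$ and $\vzz = \sum_i w_i [\vp_i]_{\times}\vq_i$. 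You instead stay entirely on the rotation-matrix side: reduce to maximizing $\trace(\MR\MM)$, parameterize the constraint set by Rodrigues' formula, and evaluate the traces with skew-matrix identities, arriving at $\MA = (1-\cos n_r)\,\MM_s$ and $\vg = -\sin(n_r)\,\va$. These are in fact the same matrices as the paper's: by the half-angle identities $\sin^2(n_r/2) = (1-\cos n_r)/2$ and $\sin(n_r/2)\cos(n_r/2) = (\sin n_r)/2$, together with $\MC = 2\MM_s$ and $\va = -\vzz/2$ (since $\MM - \MM^\transposed = \bigl[\sum_i w_i\, \vq_i \times \vp_i\bigr]_{\times} = -[\vzz]_{\times}$), your $(\MA,\vg)$ coincide exactly with the paper's definitions, so your argument proves the theorem as stated. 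What the paper's route buys is the half-angle form of the data matrices used in the implementation and consistency with the quaternion machinery it reuses for the axis-constrained variant in App.~E; what your route buys is a self-contained elementary argument that needs no appeal to the quaternion equivalence of Wahba's problem, plus an explicit surjectivity argument and treatment of the endpoints $n_r \in \{0,\pi\}$, both of which the paper's proof leaves implicit. One small inaccuracy worth fixing: for fixed $n_r \in (0,\pi)$ the map $\vn \mapsto \MR(\vn, n_r)$ is actually injective, not merely surjective up to a redundancy --- the identification $(\vn, n_r)\sim(-\vn,-n_r)$ pairs a positive with a negative angle and so never identifies two points of your fixed-angle parameter domain; only at $n_r = \pi$ do $\pm\vn$ yield the same rotation. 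As you note, none of this affects the equivalence of the two argmin problems.
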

See App. \ref{proof:wls-relax-eq-constrained-rotation} for a proof.
$\MA$ and $\vg$ are data matrices that depend on the points $\vp_i$, $\vq_i$ and the angle constraint $n_r$. A definition is given in proof.

The non-convex problem (\ref{eq:rot-est-constrained-qcqp}) over the sphere can be solved to global optimality using an algorithm based on eigen-decomposition, proposed in \cite{10.1007/978-3-642-75536-1_57}. First, it computes the eigen-decomposition  $\MA = \mathbf{Q}\mathbf{\Lambda}\mathbf{Q}^\transposed$ and then finds the rightmost root of a certain rational function. This is therefore a constant time algorithm in the number of points $N$. For details, we refer to \cite{10.1007/978-3-642-75536-1_57}.

\textbf{Minimizing the WLS-relaxation over 3D-transforms}. 
To minimize the WLS relaxation, we need to do least-squares point cloud registration with two ball constraints (on the rotation and translation):

\begin{equation}
	\label{eq:wls-pose}
	\begin{aligned}
		(\MR^*, \vt^*) =  \argmin_{(\MR, \vt) \in \mathcal{R} \times  \Rthree}  &\sum_{i=1}^{N} w_i \normsq{\MR \vp_i  - \vq_i  + \vt} \\
		\text{subject to} \quad &d_{\angle}(\MR_c, \MR) \leq n_r\\
			\quad  & \norm{\vc - \vt}  \leq n_t\\
	\end{aligned}
\end{equation}
We propose a modified Kabsch-Umeyama active-set algorithm to solve this problem. It involves solving the rotation problem (\ref{eq:tls-rwls}) as the main step.
See App. \ref{proof:minimizing-wls-relaxation} for details.

\subsection{Contracting the search space}  

To greatly improve the efficiency of the search over rotations and translations, we use a \textit{contractor}. 
The contractor is an algorithm that finds a subset of the search space that still provably contains the global optimum. 


The key observation is that the TLS objective implicitly has a feasible set. Consider the set of 3D transforms for which the $i$-th residual is not truncated:

\begin{equation}
	\begin{aligned}
		\label{eq:voting-set}
		\mathcal{V}_i := \{(\MR, \vt) \in \mathcal{R} \times \Rthree : \normsq{\MR \vp_i  - \vq_i  + \vt} \leq \epssq \}
	\end{aligned}
\end{equation}
\begin{definition}	
	The union of all $\mathcal{V}_i$ is the feasible set $\mathcal{F}$ for the TLS problem (\ref{eq:pcr-tls})
		\begin{equation}
			\begin{aligned}
				\mathcal{F} = \bigcup_{i=1}^N \mathcal{V}_i\\ 
			\end{aligned}
		\end{equation}	
\end{definition}

\begin{lemma}
Assuming there exists any 3D transform for which any residual of the TLS objective (\ref{eq:pcr-tls}) is not truncated, the global minimizer $(\MR^*, \vt^*)$ of problem \ref{eq:pcr-tls} is contained in the feasible set,
\begin{equation}
	\begin{aligned}
		(\MR^*, \vt^*) \in \mathcal{F}
	\end{aligned}
\end{equation}	
\end{lemma}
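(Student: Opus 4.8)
The plan is to argue by contradiction, exploiting the fact that outside $\mathcal{F}$ the TLS objective is constant and equal to its global maximum. First I would characterize the complement: a transform $(\MR, \vt)$ fails to lie in $\mathcal{F} = \bigcup_{i} \mathcal{V}_i$ exactly when it lies in no $\mathcal{V}_i$, i.e. when $\normsq{\MR \vp_i - \vq_i + \vt} > \epssq$ for every $i \in \RangeOneToN$. On this complement each summand of (\ref{eq:pcr-tls}) takes its truncated value $\epssq$, so the objective equals the constant $N\epssq$. Moreover $N\epssq$ is the global maximum of the objective, since every term $\min(\cdot, \epssq) \leq \epssq$.

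Next I would suppose, toward a contradiction, that the global minimizer $(\MR^*, \vt^*)$ lies outside $\mathcal{F}$. By the characterization above, its objective value is then exactly $N\epssq$. It therefore suffices to exhibit a single transform whose objective is strictly less than $N\epssq$, which would contradict global optimality. To construct one I would invoke the hypothesis: there is some transform $(\MR_0, \vt_0)$ and index $k$ for which the $k$-th residual is not truncated, $\normsq{\MR_0 \vp_k - \vq_k + \vt_0} \leq \epssq$. Since the translation ranges over all of $\Rthree$, I can replace $\vt_0$ by $\vt' = \vq_k - \MR_0 \vp_k$, which drives the $k$-th residual to $0$. All remaining terms are still capped at $\epssq$ by the $\min$, so $\text{TLS}(\MR_0, \vt') \leq (N-1)\epssq < N\epssq$, giving the contradiction and forcing $(\MR^*, \vt^*) \in \mathcal{F}$.

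The main subtlety I would flag is the boundary case $\normsq{\cdot} = \epssq$: there the $\min$ does not strictly lower the objective, and the hypothesis taken at face value only yields an objective $\leq N\epssq$ at the feasible point, which is not enough to rule out a minimizer sitting on the boundary of the truncated region. This is precisely where the freedom in $\vt$ is essential—re-centering the translation to annihilate one residual converts the weak inequality into a strict one and makes the argument valid for \emph{every} global minimizer, not merely one. If instead one adopts the convention that "not truncated" means the strict inequality $\normsq{\cdot} < \epssq$, the construction step is immediate; I would nonetheless retain the translation argument so the proof is self-contained and independent of that convention.
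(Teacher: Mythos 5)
Your proof is correct, and its core is the same optimality comparison the paper uses: a minimizer at which every residual is truncated would have objective value $N\epssq$, which cannot be optimal once some feasible point does strictly better. However, you are more careful than the paper in two places, and the differences are substantive. First, the paper's proof asserts that the residual which is non-truncated at the hypothesized point is also non-truncated \emph{at the minimizer}; that is not what follows (only \emph{some} residual, possibly with a different index, need be non-truncated there), and your version draws the correct conclusion. Second, the paper implicitly treats the hypothesized point as having value strictly below $N\epssq$, which leaves exactly the boundary gap you flag: if the only witness has a residual equal to $\epssq$, the comparison is non-strict, the objective could be globally constant at $N\epssq$, and a minimizer could then lie outside $\mathcal{F}$ under the weak reading of ``not truncated.'' Your re-centering step $\vt' = \vq_k - \MR_0\vp_k$, which is legitimate because the search space is $\mathcal{R}\times\Rthree$ and leaves the rotation untouched, converts the weak inequality into the strict bound $(N-1)\epssq < N\epssq$ and closes that gap. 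As a by-product it shows the lemma's hypothesis is essentially automatic: for any rotation one can always annihilate one residual by choice of translation, so $\mathcal{F}$ is never empty and always contains a point with objective at most $(N-1)\epssq$. In short, the paper buys brevity at the cost of the boundary case and an imprecise index claim; your argument is self-contained, repairs both, and proves a slightly stronger statement.
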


\begin{proof}
	If there exists any 3D transform for which any residual of the TLS objective is not truncated, then at the global minimizer this residual is not truncated, otherwise it would not be a minimizer. From this follows, that the minimizer is contained it at least one $\mathcal{V}_i$ and therefore in $\mathcal{F}$.
\end{proof} 
This assumption is justified since otherwise the objective is constant and therefore uninteresting for optimization.

Formally, the contractor is an algorithm that contracts a given a search space $C$ to the smaller space $C^c \subseteq C$ that still has the same intersection with the feasible set, $C^c \cap \mathcal{F} = C \cap \mathcal{F}$ \cite{CHABERT20091079}. This property guarantees that if the original space $C$ contains the global optimum, so does the contracted space $C^c$. The smaller this contracted space is, the more efficiency is gained during BnB. 


\textbf{Contracting \SOtwo}. In the following we present a contractor that contracts the space of 2D rotation $\SOtwo$. 
Given is a BnB-node $B_t \times \SOtwo \subseteq \Rthree \times \SOtwo$ where $B_t$ is a ball of translations. It contains initially the full set of 2D rotations. We contract this space to the space $B_t \times B_r$, i.e. we contract the rotation space to the ball $B_r$.

\begin{theorem}
	\label{thm:so2-contractor}
Given is a subspace $B_t \times \SOtwo \subseteq \Rthree \times \SOtwo$ where $B_t = \{ \vt \in \Rthree : || \vt_c - \vt || \leq t_r \}$ is a ball of translations, then $B_r$ is a contracted ball of rotations so that the \textit{contractor completeness} property
\begin{equation}
	\begin{aligned}
		B_t \times \SOtwo \cap \mathcal{F} = B_t \times B_r \cap \mathcal{F}
	\end{aligned}
\end{equation}	
holds. $B_r$ is defined as
\begin{equation}
	\begin{aligned}
		\label{eq:rotation-balls-given-t-ball}
		B_r &= \bigcup_{i=1}^N a^i\\
	\end{aligned}
\end{equation}
where $a^i$ are intervals on the unit circle defined as
\begin{equation}
	\begin{aligned}
		a^i &= \{ \MR \in \SOtwo : d_{\angle}(\MR, R_c^i) \leq r_r^i\}\\
		R_c^i &= \angle(\vp_i, \vb_i)\\
		r_r^i &= \arccos \left( \min\{1, \max\{ -1, h_i \} \} \right)\\
		h_i &= \frac{\normsq{\vp_i} + \normsq{\vb_i} - (\epsilon + t_r)^2}{2 \norm{\vp_i} \norm{\vb_i}} \\
		\vb_i &= \vq_i - \vt_c\\
	\end{aligned}
\end{equation}
\end{theorem}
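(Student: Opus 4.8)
The plan is to prove the two set inclusions separately, noting that one of them is immediate. Since $B_r = \bigcup_i a^i \subseteq \SOtwo$ by construction, we have $B_t \times B_r \subseteq B_t \times \SOtwo$ and therefore $B_t \times B_r \cap \mathcal{F} \subseteq B_t \times \SOtwo \cap \mathcal{F}$ with no further work. The entire content of the theorem is the reverse inclusion: every feasible point of the larger node already has its rotation component inside $B_r$. So I would fix an arbitrary $(\MR, \vt) \in (B_t \times \SOtwo) \cap \mathcal{F}$ and show $\MR \in B_r$; since its translation already lies in $B_t$ and the pair lies in $\mathcal{F}$, this places it in $B_t \times B_r \cap \mathcal{F}$ and closes the argument.

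To carry this out, I would first unpack membership in $\mathcal{F} = \bigcup_i \mathcal{V}_i$: there is an index $i$ with $\normsq{\MR\vp_i - \vq_i + \vt} \leq \epssq$, equivalently $\norm{\MR\vp_i - \vq_i + \vt} \leq \epsilon$. The goal then reduces to showing $\MR \in a^i$. Writing $\vb_i = \vq_i - \vt_c$ and using $\vt \in B_t$ (so $\norm{\vt - \vt_c} \leq t_r$), the decomposition $\MR\vp_i - \vb_i = (\MR\vp_i - \vq_i + \vt) + (\vt_c - \vt)$ and the triangle inequality give
\begin{equation}
	\begin{aligned}
		\norm{\MR\vp_i - \vb_i} \leq \norm{\MR\vp_i - \vq_i + \vt} + \norm{\vt_c - \vt} \leq \epsilon + t_r.
	\end{aligned}
\end{equation}

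Next I would expand the squared distance, using that a planar rotation preserves norms, $\norm{\MR\vp_i} = \norm{\vp_i}$,
\begin{equation}
	\begin{aligned}
		\normsq{\MR\vp_i - \vb_i} = \normsq{\vp_i} + \normsq{\vb_i} - 2\norm{\vp_i}\,\norm{\vb_i}\cos\phi,
	\end{aligned}
\end{equation}
where $\phi$ is the angle between $\MR\vp_i$ and $\vb_i$. Squaring the bound above and solving for the cosine yields exactly $\cos\phi \geq h_i$. The crux is the identification $\phi = d_{\angle}(\MR, R_c^i)$: since $R_c^i = \angle(\vp_i, \vb_i)$ is the planar rotation carrying the direction of $\vp_i$ onto that of $\vb_i$, applying $\MR$ shifts the direction of $\vp_i$ by $\MR$'s angle, so the angle between $\MR\vp_i$ and $\vb_i$ is precisely the angular distance from $\MR$ to $R_c^i$. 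By monotonicity of $\arccos$, $\cos\phi \geq h_i$ is then equivalent to $d_{\angle}(\MR, R_c^i) \leq \arccos(h_i) = r_r^i$, i.e. $\MR \in a^i \subseteq B_r$.

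Finally I would dispatch the clamping in $r_r^i$. When $h_i > 1$ the inequality $\cos\phi \geq h_i$ is infeasible, so no feasible point uses this $i$, and clamping to $\arccos(1) = 0$ only harmlessly adds the single rotation $R_c^i$ to $B_r$ without violating completeness; when $h_i < -1$ the inequality holds for all rotations, and clamping to $\arccos(-1) = \pi$ makes $a^i$ all of $\SOtwo$, as it must. The step I expect to be the main obstacle is making the identification $\phi = d_{\angle}(\MR, R_c^i)$ fully rigorous: it rests on the orientation convention behind $\angle(\cdot,\cdot)$ and on the fact that in two dimensions the angle between two vectors and the geodesic distance between the corresponding rotations coincide (both taken in $[0,\pi]$, resolving the modular ambiguity of $|\theta - \theta_{R_c^i}|$). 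This is genuinely special to $\SOtwo$ and requires careful, sign-aware bookkeeping of planar angles; the analogous reduction does not hold verbatim in $\SOthree$.
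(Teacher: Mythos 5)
Your proposal is correct and follows essentially the same route as the paper's proof: both reduce, per residual index $i$, to the inequality $\norm{\MR\vp_i - \vb_i} \leq \epsilon + t_r$, expand it via the law of cosines using $\norm{\MR\vp_i} = \norm{\vp_i}$, and convert $\cos\phi \geq h_i$ into the angular bound $d_{\angle}(\MR, R_c^i) \leq r_r^i$ defining $a^i$. The only differences are presentational: you obtain the translation elimination by a one-directional triangle inequality (which is all that completeness requires) where the paper states it as a two-way equivalence, and you treat the clamping cases $h_i \notin [-1,1]$ more explicitly than the paper does.
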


\textbf{Remark}. $d_{\angle}(\cdot, \cdot)$ is the geodesic (angular) distance between two rotations (2D or 3D), regardless of representation: In this case, $R_c^i$ is a 2D rotation represented by a scalar angle. 

\textbf{Improved contractor based on lower bound}.
The disadvantage of taking the union of all $a^i$'s as the contracted space (Eq. \ref{eq:rotation-balls-given-t-ball}) however is that only small contractions can be achieved. 
A drastic improvement is to consider the best solution found so far, i.e. the upper bound on the TLS objective. From this upper bound $\mathrm{UB}_{\mathrm{tls}}$ follows a lower bound on the number of non-trucated residuals $\mathrm{LB}_{\mathrm{mc}}$:

\begin{equation}
	\begin{aligned}
		\mathrm{LB}_{\mathrm{mc}} = \left\lceil N -  \frac{\mathrm{UB}_{\mathrm{tls}}}{\epssq} \right\rceil
	\end{aligned}
\end{equation}

Using this lower bound, we compute the smallest rotation interval that contains only points where at least $\mathrm{LB}_{\mathrm{mc}}$ many of the given intervals $a^i$ intersect.

We find this smallest rotation interval very efficiently using a sweep-line algorithm that requires $\Onlogn$ time for sorting the intervals $a^i$'s.

\subsection{Proof of global optimality} 

In this section we prove the global optimality of our Branch and Bound solver \MyAlgo{} for solving the TLS problem (\ref{eq:pcr-tls}). Branch-and-bound (BnB) is a general framework for designing globally optimal algorithms that utilizes problem-specific algorithmic components, such as bounds computation and many other heuristics \cite{Misener2013}.
The proposed algorithm \MyAlgo{} is a generic best-first Branch-and-Bound algorithm that uses only the aforementioned components: our WLS convex relaxation and the proposed contractor.
Therefore, we only prove the correctness of these components and refer the reader to \cite{bnb-script-boyd} for a general reference on the global optimality of BnB.

Overall, we prove the following properties which are sufficient to prove global optimality:
\begin{enumerate}
	\item The lower and upper bounds are always valid 
	\item The contractor has the completeness property (i.e. does not lose the optimum)
	\item The global minimizer is contained in the root node
	\item For every given node, the branching step creates a partition of this node
\end{enumerate}
The main part of the proof involves proving the first two properties. The proof details can be found in the appendix.
We prove that (1) our proposed WLS relaxation is indeed a relaxation (see App. \ref{proof:wls-relax}), and that (2) the proposed solver always minimizes this relaxation to global optimality (see App. \ref{proof:minimizing-wls-relaxation}). 
In App. \ref{proof:contractor}, we prove the proposed search space contractor has the completeness property.

To obtain the upper bound, we use two heuristics: (1) \textit{Graduated Non-Convexity} (GNC) \cite{Yang20tro-teaser, 8957085}, and (2) evaluating the TLS objective at the minimizer of the convex relaxation. 
Since a valid upper bound can be obtained by evaluating the objective function at any point inside the current BnB node, the second method yields a valid upper bound. The first method, GNC, also yields a valid upper bound when applied to the root node.

Regarding the root node, for the rotation we start from the complete space $\mathcal{R}$, that trivially contains the global minimizer. For the translation root node, see App. \ref{proof:root-box}.

The branching step over translations generally uses cubes and binary split, yielding a valid partition.
Regarding branching over the space of 3D rotations, we use the rotation vector approach proposed in \cite{HartleyKahl2009GlobRotEst} where the authors prove it is a valid partition.

\section{Experimental evaluation}
\label{sec:evaluation}
In this section we evaluate the runtime and accuracy of our Branch and Bound solver \MyAlgo{}. We evaluate two different problems: \code{pose-so2}, that is problem (\ref{eq:pcr-tls}), and for comparison with state of the art, \code{rotation}, the Wahba problem over $\SOthree$. In the \code{pose-so2} problem, we register two 3D point clouds but have to provide the correct rotation axis.
We performed the experiments on a desktop PC with an Intel i7 12700KF CPU and 32 GB of RAM.
Our algorithm is implemented in C++.

\subsection{Synthetic experiments}

We generate synthetic instances of the point cloud registration problem by first sampling inlier points from a unit cube $s \cdot [-1, 1]^3$ with scale $s$. Then, we  uniformly sample a perturbation for each point from a ball with radius $\epsilon^2$. Finally, we  uniformly sample outliers from the bounding box of the inliers. Throughout all experiments, we use $s=10$ and $\epsilon=0.5$. 
For the ground-truth transform, we uniformly sample $\SOthree$ for the rotation. For the translation, we uniformly sample a ball with radius $s$. We provide the solver with the ground-truth axis of rotation $\vn^*$.

\subsubsection{Runtime and scalability} 

\begin{figure}[!ht]
	\centering
	\includegraphics[width=0.9\linewidth]{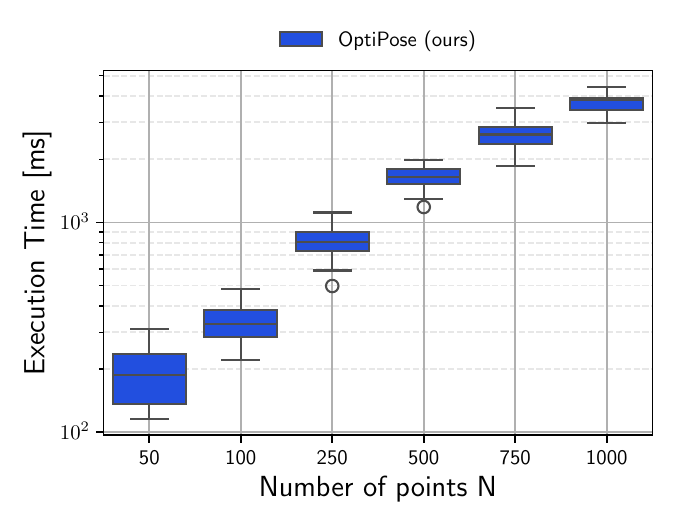}
	\caption{Our globally optimal solver scales well on 3D point cloud registration with known axis (synthetic data, 50\% outlier rate).}
	\label{fig:rotation-exe-time-pose}
\end{figure}

We first evaluate the scalability of the 3D point cloud registration with fixed axis (\code{pose-so2} problem).  As shown in Fig. \ref{fig:rotation-exe-time-pose}, our solver is able to register two point clouds with $N=100$ points in near real-time.

\subsubsection{Comparison with SDP relaxation solver STRIDE} 
In this section we compare our Branch and Bound solver with the state of the art SDP solver STRIDE \cite{9785843}. STRIDE is designed specifically to solve the large-scale problems that arise from tight semidefinite relaxations of the TLS problem. 
To our knowledge, STRIDE is currently the only solver that can solve the TLS registration problem with sizes as large as $N=100$ to proven global optimality. 

\textbf{Experimental procedure}. Our solver cannot solve the TLS registration problem over the full 3D transform as STRIDE does, and instead requires the axis of rotation (\code{pose-so2} problem). We therefore do the comparison only for the \code{rotation} problem.
We do the comparison on synthetic instances as generated by our procedure and measure the runtime, estimation errors and the achieved suboptimality. We use the public MATLAB-implementation of the authors \cite{cert-perception-github}.
We perform five trials for STRIDE and 20 for the proposed method OptiPose.

We define the suboptimality $\eta$ of the solution based on the upper bound (UB) and the lower bound (LB) (as in \cite{9785843}):
\begin{equation}
	\begin{aligned}    
		\eta = \frac{\text{UB} - \text{LB}}{1 + \text{UB} + \text{LB}}
	\end{aligned}
\end{equation}
The upper and lower bound are easily obtained from our BnB solver at termination. For details how STRIDE obtains its lower bound, see \cite{9785843}.
In case our BnB solver \textit{converges}, UB$=$LB holds, and the suboptimality is therefore zero. However, because we perform all computations in single precision floating point arithmetic, we assign in this case for $\eta$ a dummy value of $10^{-6}$.

\begin{figure}[!ht]
	\centering
	\includegraphics[width=0.9\linewidth]{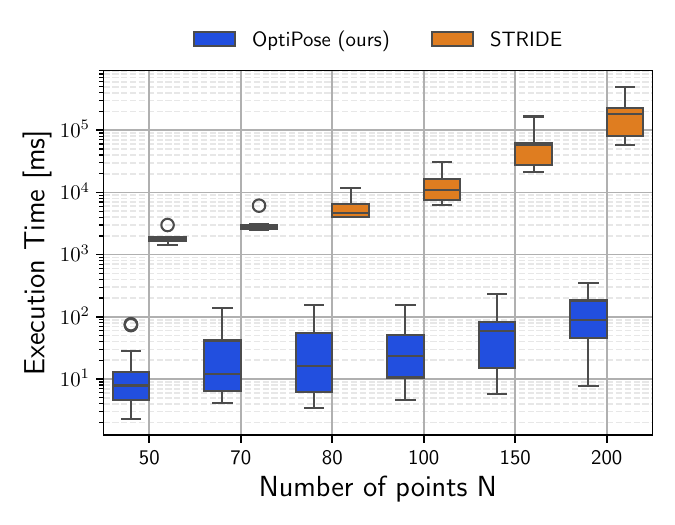}
	\caption{On average, our solver solves the rotation-only TLS problem in 100 milliseconds, whereas  the state-of-the-art solver STRIDE is two orders of magnitude slower (synthetic data, 50\% outlier rate).}
	\label{fig:rotation-exe-time-vs-stride}
\end{figure}
\begin{figure}[!ht]
	\centering
	\includegraphics[width=0.9\linewidth]{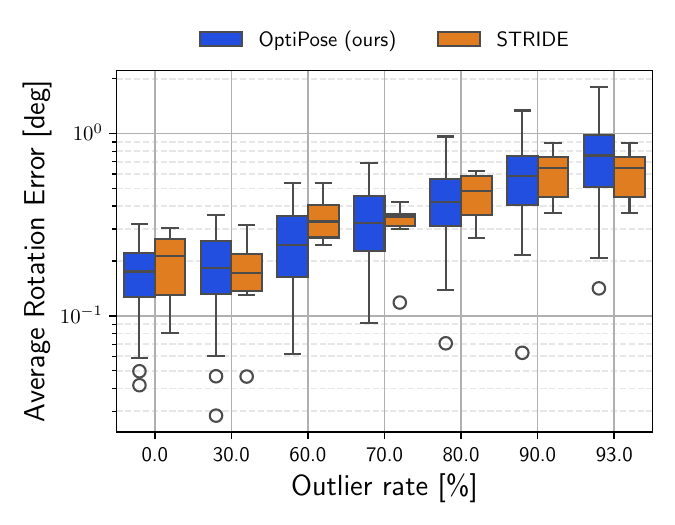}
	\caption{The rotation error compared to STRIDE on the rotation-only TLS problem. It remains below a median of one degree even at a very high outlier rates of 93\% (synthetic data, $N=100$).}
	\label{fig:rot-err-vs-stride}
\end{figure}
\textbf{Results}.
First, we measured the runtime for the rotation-only problem with $50\%$ outlier rate while varying the point cloud size. The results are shown in Fig. \ref{fig:rotation-exe-time-vs-stride}. 
Next, we set $N=100$ and varied the outlier rate to evaluate the achieved rotation error. As shown in Fig. \ref{fig:rot-err-vs-stride}, our solver achieves a similar estimation error to  STRIDE.

\begin{figure}[!tb]
	\centering
	\includegraphics[width=\linewidth]{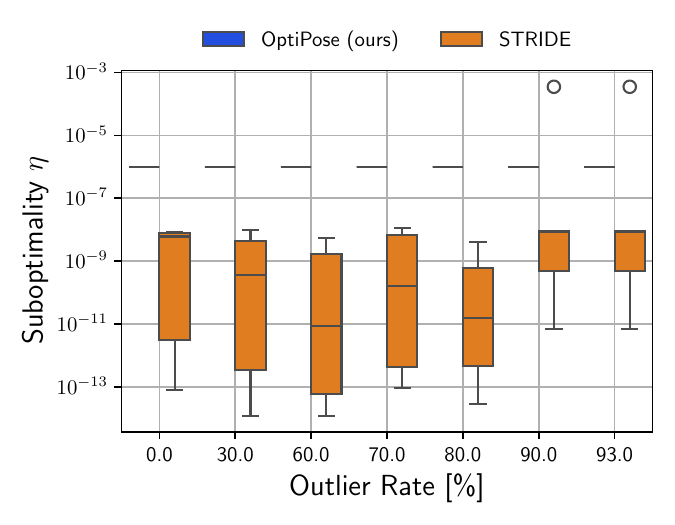}
	\caption{The reached $\eta$-suboptimality on synthetic data, rotation-only TLS problem, $N=30$. Lower is better. Our solver converged (proved global optimality) in all instances, yielding $\eta = 10^{-6}$. STRIDE on the other hand terminates with an increased suboptimality above $90\%$ outlier rate.}
	\label{fig:stride-subopt-eval}
\end{figure}

Next, we evaluated the achieved relative suboptimality over the outlier rate, the results are shown in Fig. \ref{fig:stride-subopt-eval}. 

\subsection{Adversarial experiments}  

In this section, we construct instances with a second good local minimum that is very close to the global minimum. This local minimum has a similar objective function value but a different minimizer.
The motivation is that, most instances of the TLS problem are easy because there is only one good global minimum and no other other good local minimas. Therefore, we create synthetic instances with the explicit aim to contain a second good local minimum to obtain empirical evidence that our solver is indeed globally optimal. We call such instances adversarial instances.

\textbf{Creating adversarial instances}. We create such adversarial instances by the following steps: 1) First, we sample a regular instance of the point cloud registration problem with $N$ points, we denote it as $(\MP_1, \MQ_2, \MT_{1}^*)$, $\MT_{1}^*$ is the ground-truth transform.  
Then, we  sample a second instance $(\MP_2, \MQ_2, \MT_{2}^*)$, with $a N$ many points, where $a$ is a factor between zero and one that controls how close (roughly) the second local minimum is to the global minimum. The higher $a$ is, the closer the local optimum is to the global optimum.
After this, we concatenate $\MP_1$ and $\MP_2$ to obtain the new $\MP$ point cloud, as well as  $\MQ_1$ and $\MQ_2$ to obtain the new $\MQ$ point cloud of the adversarial instance. This adversarial instance now has two good minimizers, namely $\MT_{1}^*$ and $\MT_{2}^*$. 
Finally, we simply evaluate the TLS objective at these two transforms to determine which one is the global minimizer.

\begin{figure}[!ht]
	\centering
	\includegraphics[width=0.49\linewidth]{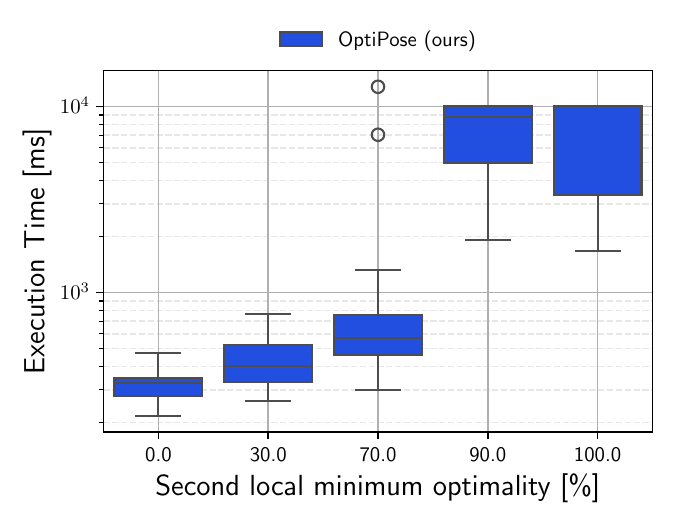}
	\includegraphics[width=0.49\linewidth]{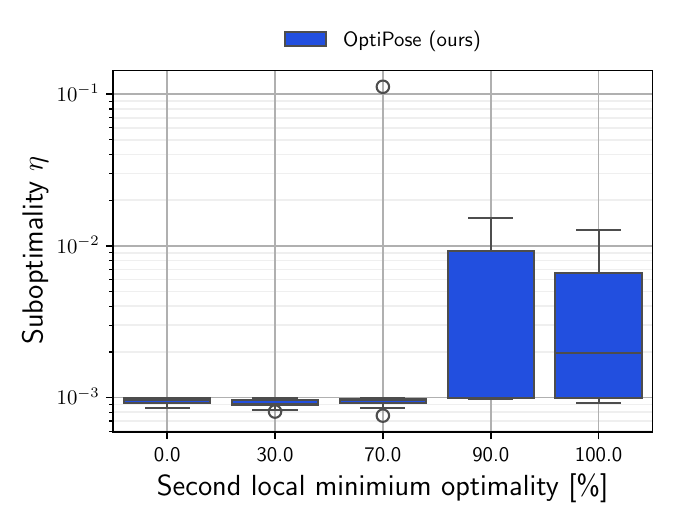}
	\caption{When there is a second local optimum close to the global optimum, proving global optimality takes more time, resulting in the time limit of 10s being hit more frequently and therefore an increased average $\eta$-suboptimality (\code{pose-so2} problem, $N=100$, $50\%$ outlier-rate).}
	\label{fig:runtime-adv}
\end{figure}

\textbf{Experimental procedure}. We solved the \code{pose-so2} problem, $N$ was set to 100 and the outlier rate at $50\%$.
We varied the factor of adversarial points $a$ from 0 to 1 and evaluated whether our solver either 1) found the global minimum, or 2) exceeded the time limit but output a valid lower bound. The second case means that BnB proved that a better solution might exist. For each value of $a$, we performed 100 trials. We set  a time limit of 10 seconds as well as a $\eta$-suboptimality threshold of $10^{-3}$ as termination criteria.

\textbf{Results}. We did not observe any single instance where our solver output a suboptimal solution, claiming it's the optimal one. This means that either it outputs the global optimum or it outputs a suboptimal solution and provides a valid lower bound.
However, we observed that our solver has more difficulty proving global optimality when there is a second good local minimum. This is indicated by an increasing runtime (Fig. \ref{fig:runtime-adv}). The solver also ran out of time (10 seconds) more frequently, resulting in an increased average $\eta$-suboptimality.

\subsection{Relaxation runtime}
\begin{figure}[!htb]
	\centering
	\includegraphics[width=\linewidth]{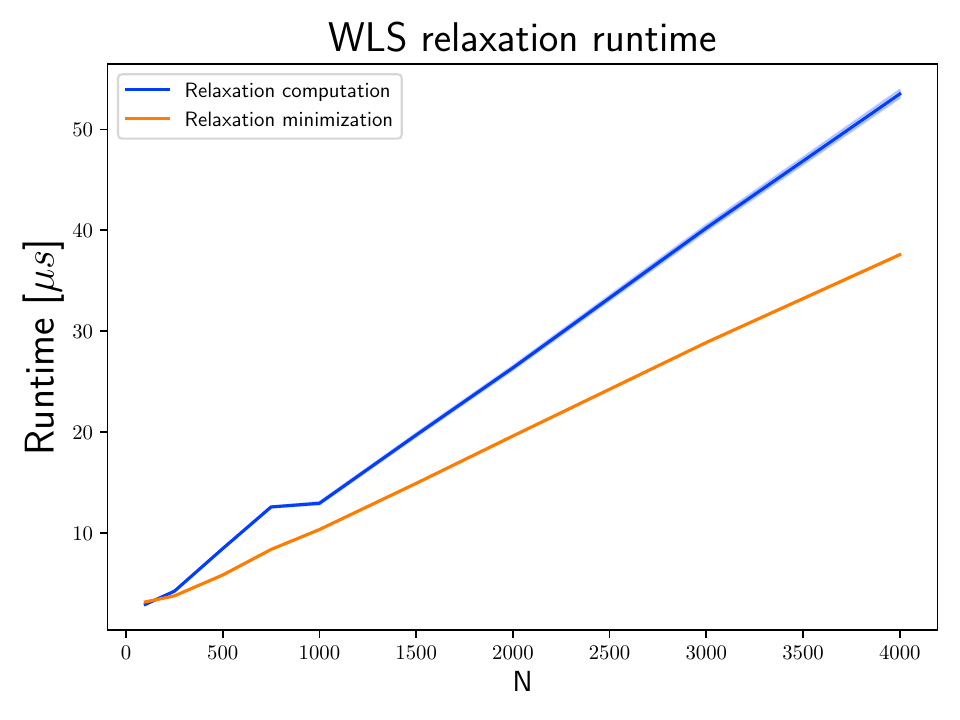}
	\caption{The runtime of our convex relaxation scales linearly with the data size $N$ and is in practice very fast, taking only $50$ microseconds for 4000 points.}
	\label{fig:wls-relax-runtime}
\end{figure}

The per iteration costs of our solver are kept at a minimum due to the very fast computation and minimization of our convex relaxation. As Fig. \ref{fig:wls-relax-runtime} shows, it scales for thousands of points and takes only 50 microseconds. 

\subsection{Discussion}

\textbf{Runtime and scalability}. 
The results demonstrate  that our solver scales well when solving the TLS registration problem \ref{eq:pcr-tls}. 
It can  register 100 points in 100ms, achieving real-time performance. 

\textbf{Comparison with SDP relaxation solver STRIDE}.
Compared to the state-of-the-art solver STRIDE, our solver can solve the rotation-only TLS problem two orders of magnitude faster. Our solver converged to the global minimum in all of the tested instances. STRIDE on the other hand can find and certify the global optimum only as long as the outlier rate is below 90\%, above which the $\eta$-suboptimality increases to $10^{-4}$.
The estimated rotation error is as low as with STRIDE, even at high outlier rates above $90\%$, it stays below one degree. 
These results suggest that our solver correctly solves the TLS problem, as it achieves the same level of robustness and low estimation errors expected from the TLS estimator.

One advantage of our Branch and Bound method is that it is able to prove global optimality even for extreme outlier rates above $90\%$. SDP relaxation approaches like STRIDE fail in such instances because the relaxation is not tight. 
In other words, while SDP relaxations fail on difficult instances, our method can solve all instances of the TLS problem, given  enough time.

\textbf{Adversarial experiments}. We performed adversarial experiments with a second local optimum close to the global optimum. They verified the correctness of the implementation. We did not observe any single instance where our solver computed an invalid lower bound at termination, indicating it is indeed globally optimal.
Additionally, during the development of the implementation, the adversarial instances proved a useful tool for uncovering implementation bugs.

\textbf{Relaxation runtime}. 
Convex relaxations are a well-known idea for global optimization. Generally, the more tight a relaxation is, the more expensive it is to compute. 

Prevalent SoTa methods use semidefinite convex relaxations that need to be computed and minimized only once to obtain a \textit{tight} lower bound. Our approach on the other hand is to use non-tight relaxations that need to be computed many times as part of Branch-and-Bound, but are however much cheaper to compute.The tighter a relaxation needs to be, the more computationally expensive it is.

\section{Conclusion}
\label{sec:conclusion}

In this paper, we designed a fast and globally optimal solver for robust point cloud registration in the Truncated Least Squares (TLS) formulation. 
We showed that the TLS objective admits a convex relaxation that leads to simple weighted least-squares problems. These problems can be solved easily by adapting existing least-squares solvers.
We showed that by careful consideration of the feasible set of the TLS problem, we can greatly reduce the search space for BnB using a contractor method. Compared with state-of-the-art tight semidefinite relaxations, our method uses inexpensive to compute loose relaxations. This yields a lower average runtime than tight relaxations, even if they need to be computed hundreds of times.

\textbf{Limitations}. The main limitation of our solver is that it cannot solve for the full 3D transform and instead requires the axis of rotation.
While for autonomous vehicles, the axis of rotation is approximately the gravity direction which can be measured using Inertial Measurement Units (IMUs), a general approach would be favorable. 

\textbf{Future work}.
Future work will focus on generalizing the contractor method to 3D rotations so that the problem can be solved over the full 3D transform.
Another promising direction for future research is to study the trade-off between tightness and the computational cost of different convex relaxations.

{
    \small
    \bibliographystyle{ieeenat_fullname}
    \bibliography{main}
}

\clearpage
\setcounter{page}{1}
\maketitlesupplementary

\begin{appendix}

\section{WLS relaxation}
\label{proof:wls-relax}

Proof of Theorem \ref{thm:wls-relax}.
\begin{proof}
	
	In the TLS objective, non-convexity is introduced solely by the truncation since the squared terms are convex.
	Therefore, we only need to show that the re-weighted sum leads to the residuals not being truncated. 
	For this, we apply an affine transform on each residual so that $w_i r_i + y_0 = r_{min}^i$ for $r_i = r_{min}^i$ and $w_i r_i + y_0 = \epsilon^2$ for $r_i = r_{max}^i$ holds:
	Subtracting the first from the second equation yields:
	\begin{equation}
		\begin{aligned}	
			&w_i (r_{max}^i - r_{min}^i) = \epsilon^2 - r_{min}^i\\
			\iff &w_i  = \frac{\epsilon^2 - r_{min}^i}{r_{max}^i - r_{min}^i}
		\end{aligned}
	\end{equation}
	and solving for $y_0$ yields:
	\begin{equation}
		\begin{aligned}	
			w_i r_{min}^i + y_0 &= r_{min}^i\\
			\iff y_0 &= r_{min}^i - w_i r_{min}^i\\
			\iff y_0 &= (1- w_i) r_{min}^i
		\end{aligned}
	\end{equation}
	
	Therefore, $w_k r_k + (1 - w_k) r_{min}^k \leq \epsilon^2$ holds for all $k$ for which $o_k = 0$ and therefore these terms are not truncated and thus convex. The sum
	
	\begin{equation}
		\begin{aligned}	
			\sum_{i=1}^{N} (1 - o_i) \left(w_i r_i  + (1 - w_i) r_{min}^i \right)
		\end{aligned}
	\end{equation}
	
	is therefore convex since sums of convex functions are convex \cite[p.36]{Boyd_Vandenberghe_2004}.
	
	Since $\sum_{i=1}^{N} o_i\epsilon^2$ is constant, the overall term \ref{eq:tls-wls-relaxation} also remains convex. Since $r_k \in [r^k_{min}, r^k_{max}]$ and $w_k \leq 1$, $w_k r_k  + (1 - w_k) r_{min}^k \leq r_k$ for all $k$ for which $o_k = 0$, Eq. \ref{eq:tls-wls-relaxation} is below the objective \ref{eq:tls-t-opt-branch2}. Therefore, \ref{eq:tls-wls-relaxation} is a convex relaxation of \ref{eq:tls-t-opt-branch2} over $\mathcal{X}$.
	
\end{proof}	

\section{Interval analysis of residuals under translation}
\label{proof:tls-relax-minmax-r}

Proof of Theorem \ref{thm:interval-analysis-tranlation}.
\begin{proof}
	With $\Delta \vt := \vc - \vt$:
	\begin{equation}
		\begin{aligned}
			  &\min_{\vt \in \MB^t} \normsq{ \vp - \vq + \vt } \\
			= &\min_{\norm{\Delta\vt} \in [0, n_r]} \normsq{ \vp - \vq + \vc - \Delta\vt } \\
			= &\min_{\norm{\Delta\vt} \in [0, n_r]} \normsq{ \vd - \Delta\vt } \\
			= &\min_{\norm{\Delta\vt} \in [0, n_r]} \normsq{\vd} - 2\vd \Delta\vt  + \normsq{\Delta\vt} \\
			= &\max(0, \norm{ \vd } - n_r)^2\\
		\end{aligned}
	\end{equation}
	and similarly:
	\begin{equation}
		\begin{aligned}
			&\max_{\vt \in \MB^t} \normsq{ \vp - \vq + \vt } \\
			= &\max_{\norm{\Delta\vt} \in [0, n_r]} \normsq{\vd} - 2\vd \Delta\vt  + \normsq{\Delta\vt} \\
			= &(\norm{ \vd } + n_r)^2
		\end{aligned}
	\end{equation}
	
\end{proof}

\section{Interval analysis of residuals under rotation}
\label{proof:wls-interval-analysis-rotation}

Proof of Theorem \ref{thm:interval-analysis-rotation}.
\begin{proof}
	With $\MR := \MR_c \Delta\MR$, due to a rotation being an isometry:
	\begin{equation}
		\begin{aligned}
			 &\min_{\MR \in \MB^R} \normsq{ \MR \vp_i - \vq_i } \\
			= &\min_{\MR \in \MB^R} \normsq{ \MR_c^\transposed \left( \MR \vp - \vq \right) } \\
			= &\min_{\MR \in \MB^R} \normsq{ \MR_c^\transposed \left( \MR \vp - \vq \right) } \\
			= &\min_{d_{\angle}(\MI, \Delta \MR) \in [0, n_r]} \normsq{ \Delta\MR \vp - \MR_c^\transposed  \vq } \\
			= &\min_{d_{\angle}(\MI, \Delta \MR) \in [0, n_r]} \left( \Delta\MR \vp - \MR_c^\transposed  \vq \right)^\transposed
				\left( \Delta\MR \vp - \MR_c^\transposed  \vq \right)\\ 
			= &\min_{d_{\angle}(\MI, \Delta \MR) \in [0, n_r]} \left(\Delta\MR\vp \right)^\transposed \left(\Delta\MR\vp \right) - 2 (\Delta\MR\vp )^\transposed \MR_c^\transposed\vq + (\MR_c^\transposed\vq )^\transposed (\MR_c^\transposed\vq )\\
			= &\min_{d_{\angle}(\MI, \Delta \MR) \in [0, n_r]} \vp^\transposed \Delta\MR^\transposed \Delta\MR\vp - 2 (\Delta\MR\vp )^\transposed \MR_c^\transposed\vq + \vq^\transposed  \MR_c \MR_c^\transposed\vq\\
			= &\min_{d_{\angle}(\MI, \Delta \MR) \in [0, n_r]} \normsq{\vp} - 2 (\Delta\MR\vp )^\transposed \MR_c^\transposed\vq + \normsq{\vq}\\
		\end{aligned}
	\end{equation}
	Since $\Delta \MR$ is defined as rotating a point by at most $n_r$ (definition of geodesic distance $d_{\angle}(\MR, \MR_c) \leq n_r \iff d_{\angle}(\MI, \Delta \MR \leq n_r$) the claim follows.
	The maximum case follows similarly.
\end{proof}

\section{Least-squares rotation estimation with equality-constrained angle}
\label{proof:wls-relax-eq-constrained-rotation}

Proof of Theorem. \ref{thm:constrained-wahba-davenport}.
\begin{proof}
	The proof is based on the Davenports quaternion method for solving the Wahba problem \cite{davenportsQ, 8594296}.
	We define 
	\begin{equation}
		\label{eq:ev-constr-data-mat-a}
		\begin{aligned}
			\MA &= \sin^2 \left(\frac{n_r}{2} \right) \MC, \quad \vg =  \sin \left(\frac{n_r}{2} \right) \cos \left(\frac{n_r}{2} \right) \vzz,\\
			\MC &= \MB + \MB^\transposed,\\
			\MB &= \sum_{i=1}^N w_i \vp_i\vq_i^\transposed \in \Rone^{3 \times 3}, \quad
			\vzz = \sum_{i=1}^N w_i [\vp_i]_{\times}\vq_i \in \Rone^{3}
		\end{aligned}
	\end{equation}
	
	The problem \ref{eq:tls-rwls-a} without the equality constraint, i.e. the Wahba problem, can be stated equivalently using quaternions as \cite{davenportsQ, 8594296}:
	\begin{equation}
		\label{eq:davenports-q-problem}
		\begin{aligned}
			\max_{\vq \in \Rone^4} &\quad \vq^\transposed \MK \vq\\
			\text{subject to} &\quad  ||\vq|| = 1
		\end{aligned}
	\end{equation}
	where $\vq$ is the quaternion representing the rotation and $\MK \in \Rone^{4\times4}$ is a data matrix defined as:

	\begin{equation}
		\label{eq:k-mat}
		\begin{aligned}
			\MK = \left(
			\begin{array}{c|c}
				\begin{array}{ccc}
					& & \\
					& \MC & \\
					& & 
				\end{array}
				&
				\begin{array}{c}
					\vzz \\
				\end{array}
				\\
				\hline
				\begin{array}{ccc}
					& \vzz^\transposed &
				\end{array}
				&
				\trace(\MC)
			\end{array}
			\right) \in \Rone^{4 \times 4}
		\end{aligned}
	\end{equation}
	
	We will separate the optimization over the rotation axis and angle using the following definition:
	
	\begin{equation}
		\label{eq:q-axis-angle}
		\begin{aligned}
			\vq = \left( \sin \left( \frac{\theta}{2} \right) \vn, \cos \left(\frac{\theta}{2} \right) \right)^\transposed
		\end{aligned}
	\end{equation}
	where $\vn \in \Rthree$ is a unit-vector that is the axis of rotation and $\theta$ is the angle of rotation.
	We rewrite the objective function $\vq^\transposed \MK \vq$  of \ref{eq:davenports-q-problem} using this definition: 
	
	\begin{equation}
		\label{eq:dv-q-rew}
		\begin{aligned}
			&\vq^\transposed \MK \vq\\
			&= \sin^2 \left(\frac{\theta}{2} \right) \vn^\transposed \MC \vn + 2 \sin \left(\frac{\theta}{2} \right) \cos \left(\frac{\theta}{2} \right) \vzz^\transposed \vn \\
			&+ \cos^2 \left(\frac{\theta}{2} \right) \trace(\MC)
		\end{aligned}
	\end{equation}
	
	Now the axis $\vn$ and the angle $\theta$ occur separately. 
	By the equality constraint, $\theta$ is already given as $\theta = n_r$. Therefore, the term $ \cos^2 \left(\frac{\theta}{2} \right) \trace(\MC)$ in Eq. \ref{eq:dv-q-rew} is constant and does not affect the minimizer, we will therefore omit it. 
	 By defining the data-matrices $\MA$ and $\vg$ as in Eq. \ref{eq:ev-constr-data-mat-a}, we see that the objective function of problem \ref{eq:rot-est-constrained-qcqp} is equivalent to the one of problem \ref{eq:davenports-q-problem} and therefore equivalent to \ref{eq:tls-rwls-a}, concluding the proof.
\end{proof}

\section{Least-squares rotation estimation with equality-constrained axis}
\label{proof:wls-relax-eq-constrained-rotation-axis}

To solve the Wahba-problem with equality-constrained axis, we need again the separation into axis and angle. We do this again using the quaternion formulation.

\begin{equation}
	\begin{aligned}
		&\mathbf{q}^\transposed \mathbf{K} \mathbf{q}\\
		&=\mathbf{q}^\transposed \left( \mathbf{K}_{1} + \mathbf{K}_{2} \right) \mathbf{q}\\
		&= \mathbf{q}^\transposed \mathbf{K}_{1} \mathbf{q} + \mathbf{q}^\transposed \mathbf{K}_{2} \mathbf{q}\\
		&= \sin^2 \left(\frac{\theta}{2} \right) \mathbf{n}^\transposed \mathbf{C} \mathbf{n} + 2 \sin \left(\frac{\theta}{2} \right) \cos \left(\frac{\theta}{2} \right) \mathbf{z}^\transposed \mathbf{n} + \cos^2 \left(\frac{\theta}{2} \right) \trace(\mathbf{C})
	\end{aligned}
\end{equation}

We need to optimize over rotation angle $\theta$ where the axis $\vn$ is fixed. We rewrite the objective in quadratic form and put all constants in a matrix:

\begin{equation}
	\begin{aligned}
		&\sin^2 \left(\frac{\theta}{2} \right) \mathbf{n}^\transposed \mathbf{C} \mathbf{n} + 2 \sin \left(\frac{\theta}{2} \right) \cos \left(\frac{\theta}{2} \right) \mathbf{z}^\transposed \mathbf{n} + \cos^2 \left(\frac{\theta}{2} \right) \trace(\mathbf{C})\\
		&= \begin{pmatrix}
			\sin(\alpha)& \cos(\alpha)
		\end{pmatrix} 
		\begin{pmatrix}
			c_1 & c_2\\
			c_2 & c_3
		\end{pmatrix} 
			\begin{pmatrix}
				\sin(\alpha)\\
			 \cos(\alpha)
			\end{pmatrix}\\
		&= \vv^\transposed \MP \vv
	\end{aligned}
\end{equation}

With $\alpha = \frac{\theta}{2}$ and the constants

\begin{equation}	
	\begin{aligned}
		c_1 &= \mathbf{n}^\transposed \mathbf{C} \mathbf{n}\\
		c_2 &= \mathbf{z}^\transposed \mathbf{n}\\
		c_3 &= \trace(\mathbf{C})\\
	\end{aligned}
\end{equation}

Now, by observing that $\vv \in \{ [\sin(\alpha), \cos(\alpha) ] : \alpha \in [-\pi, \pi]\} = \{ \vv \in \Rtwo : \norm{\vv} = 1\}$, we can state the optmization problem over the angle equivalently as a problem over an unit-norm vector $\vv$:

\begin{equation}
	\begin{aligned}
		\max_{\vv \in \Rtwo} &\quad \vv^\transposed \mathbf{P} \vv\\
		\text{subject to} &\quad \norm{\vv} = 1
	\end{aligned}
\end{equation}

This problem is solved by the major eigenvector of $\MP$. 

\section{Minimizing the WLS relaxation}
\label{proof:minimizing-wls-relaxation}


\begin{theorem}

	Minimizing the WLS relaxation of problem (\ref{eq:pcr-tls}) is the following least-squares problem with two ball constraints:
	\begin{equation}	
		\begin{aligned}
			\label{eq:wls-pose}
			(\MR^*, \vt^*) =  \argmin_{(\MR, \vt) \in B_r \times B_t}  &\sum_{i=1}^{N} w_i \normsq{\MR \vp_i  - \vq_i  + \vt}\\
		\end{aligned}
	\end{equation}
	With the rotation ball $B_r$ and the translation ball $B_t$ defined as:
	\begin{equation}	
		\begin{aligned}
			B_r &= \{ \MR \in \SOtwo : d_{\angle}(\MR_c, \MR) \leq n_r \} \\
			B_t &= \{ \vt \in \Rthree : \norm{\vc - \vt}  \leq n_t \}	
		\end{aligned}	
	\end{equation}
	It is solved by first solving for rotation
	\begin{equation}	
		\begin{aligned}
			\label{eq:wls-pose-simp}
			\MR^* =  \argmin_{\MR \in \hat{B_r} \cap B_r}  \sum_{i=1}^{N} w_i \normsq{\MR \vxx_i  - \vy_i} 		
		\end{aligned}
	\end{equation}
	where 	
	\begin{equation}	
		\begin{aligned}
			\hat{B_r} = \{ \MR \in \SOtwo : \norm{\bar{\vq} -  \MR \bar{\vp} - \vc} \leq n_t \}
		\end{aligned}
	\end{equation}
	and $\vxx_i = \vp_i - \bar{\vp}$ and $\vy_i = \vq_i - \bar{\vq}$, $\bar{\vp}$ and $\bar{\vq}$ are the weighted centroids of both point clouds:
	
	\begin{equation}	
		\begin{aligned}
			\bar{\vp} = \sum_{i=1}^N w_i \vp_i, \quad \bar{\vq} = \sum_{i=1}^N w_i \vq_i
		\end{aligned}
	\end{equation}
	The optimal translation $\vt^*$ is then given by
	\begin{equation}	
		\begin{aligned}
			\vt^* = \bar{\vq} -  \MR^* \bar{\vp}
		\end{aligned}
	\end{equation}
\end{theorem}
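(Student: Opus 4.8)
The plan is to prove the decomposition by iterated minimization, first over $\vt$ for a fixed rotation and then over $\MR$. The structural fact I would exploit is that the objective $J(\MR,\vt) = \sum_{i=1}^{N} w_i \normsq{\MR\vp_i - \vq_i + \vt}$ is, as a function of $\vt$ alone, an \emph{isotropic} convex quadratic: completing the square gives $J(\MR,\vt) = W\normsq{\vt - \tilde\vt(\MR)} + f(\MR)$, where $W = \sum_{i=1}^{N} w_i$, $\tilde\vt(\MR) = \bar\vq - \MR\bar\vp$ is the unconstrained centroid-aligning translation, and $f(\MR) = \sum_{i=1}^{N} w_i\normsq{\MR\vxx_i - \vy_i}$ is the centered residual (the cross terms cancel precisely because $\bar\vp,\bar\vq$ are the \emph{weighted} centroids, so $\sum_i w_i\vxx_i = \sum_i w_i\vy_i = \mathbf 0$). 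Hence for every fixed $\MR$ the inner problem $\min_{\vt\in B_t} J(\MR,\vt)$ is exactly the projection of $\tilde\vt(\MR)$ onto the Euclidean ball $B_t$.

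\textbf{Reduction.} Next I would observe that $\tilde\vt(\MR)\in B_t$ holds exactly when $\norm{\bar\vq - \MR\bar\vp - \vc}\le n_t$, i.e.\ when $\MR\in\hat{B_r}$. On this set the ball constraint on $\vt$ is inactive, the projection is the identity, and the partial minimum $V(\MR) := \min_{\vt\in B_t} J(\MR,\vt)$ coincides with $f(\MR)$, with the minimizing translation equal to $\tilde\vt(\MR)=\bar\vq-\MR\bar\vp$. Restricting the outer rotation search to $\hat{B_r}\cap B_r$ therefore turns the joint problem into $\MR^* = \argmin_{\MR\in\hat{B_r}\cap B_r} f(\MR)$, a Wahba-type least-squares problem over the angular ball $B_r$, which is solved to global optimality by the constrained-Wahba machinery of Theorem~\ref{thm:constrained-wahba-davenport}; the optimal translation is then read off as $\vt^* = \bar\vq - \MR^*\bar\vp$. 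Plugging back confirms feasibility ($\vt^*\in B_t$ since $\MR^*\in\hat{B_r}$) and reproduces the claimed formulas.

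\textbf{Main obstacle.} The delicate point — and the step I expect to be hardest — is showing that shrinking the outer feasible set from $B_r$ to $\hat{B_r}\cap B_r$ does not discard the joint optimum, which matters because an \emph{over}estimated partial minimum would yield an invalid (too-high) lower bound in BnB. Using the isotropy identity, the exact value is $V(\MR) = f(\MR) + W\,\max\!\big(0,\ \norm{\tilde\vt(\MR)-\vc}-n_t\big)^2$, so outside $\hat{B_r}$ the objective picks up a penalty whose gradient \emph{vanishes} on the boundary $\partial\hat{B_r}$. Thus if the unconstrained centered minimizer $\MR_f=\argmin_{\MR\in B_r} f$ already lies in $\hat{B_r}$, then $\MR_f$ minimizes $V$ over all of $B_r$ and the reduction is exact. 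But if $\MR_f\notin\hat{B_r}$ — possible when $B_t$ is small relative to the centroid offset — the zero boundary gradient means $V$ is still strictly decreasing as one crosses out of $\hat{B_r}$ toward $\MR_f$, so the true minimizer of $V$ lies strictly outside $\hat{B_r}$. The complete proof must therefore either (i) establish that the translation constraint is inactive at the optimum, so $\MR_f\in\hat{B_r}$; or (ii) fold the complementary active case into the active-set iteration, enforcing $\norm{\vc-\vt}=n_t$ as an equality and re-solving the rotation under this coupled constraint — the role played by the angle-equality variant in Theorem~\ref{thm:constrained-wahba-davenport}. Handling this coupling between the two active sets is the crux; the remaining algebra (completing the square, cross-term cancellation, projection onto $B_t$) is routine.
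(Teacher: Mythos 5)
Your decomposition is the same one the paper uses: the paper also eliminates the translation through the weighted-centroid condition $\vt^* = \bar{\vq} - \MR^*\bar{\vp}$ and reduces the joint problem to a rotation-only Wahba problem over $\hat{B_r}\cap B_r$ (it presents this as a modified Kabsch--Umeyama algorithm rather than by explicitly completing the square, but the algebra is identical). The divergence is at exactly the step you flag as the crux. The paper closes it in one line: it asserts that the stationarity condition $\vt^* = \bar{\vq} - \MR^*\bar{\vp}$ is \emph{necessary} for global optimality of the two-ball-constrained problem, combines this with primal feasibility $\norm{\vc-\vt^*}\leq n_t$ to conclude $\MR^*\in\hat{B_r}\cap B_r$, and stops. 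There is no case analysis for an active translation constraint and no active-set iteration of the kind you sketch as alternative (ii).

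Your ``main obstacle'' is therefore not a shortcoming of your write-up relative to the paper; it identifies a step the paper asserts without justification. For a ball-constrained minimum, unconstrained stationarity in $\vt$ is necessary only when the constraint is inactive at the optimum; when it is active, KKT yields $\sum_i w_i(\MR^*\vp_i-\vq_i+\vt^*)+\lambda(\vt^*-\vc)=0$ with $\lambda\geq 0$, and the centroid formula fails. Your identity $V(\MR)=f(\MR)+W\max\bigl(0,\norm{\tilde{\vt}(\MR)-\vc}-n_t\bigr)^2$ makes the failure concrete: if the minimizer of $f$ over $B_r$ lies outside $\hat{B_r}$ and $f$ is small there, the joint optimum sits strictly outside $\hat{B_r}$. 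In the extreme case $\hat{B_r}\cap B_r=\EmptySet$ (e.g.\ $\bar{\vp}=\bar{\vq}=\M{0}$ and $\norm{\vc}>n_t$, which does occur for BnB nodes whose translation ball is far from the centroid offset), the paper's recipe returns nothing although the continuous objective over the compact feasible set certainly attains its minimum. So, judged against the paper, your proposal takes the same route but is more honest about the gap: you leave the active-constraint case unresolved, whereas the paper resolves it with a claim that is false in general. A correct completion needs either your alternative (ii) --- enforce $\norm{\vc-\vt}=n_t$ as an equality and re-solve the coupled rotation problem --- or, more directly, minimization of your exact value function $V(\MR)$ over $B_r$; note that silently overestimating the partial minimum on $B_r\setminus\hat{B_r}$ is not benign here, since it can produce an invalid (too high) lower bound and break the BnB optimality guarantee.
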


\begin{proof}
	In the following, we assume without loss of generality that the axis of rotation is $\vn^* = [0, 0, 1]^\transposed$. This assumption is possible because we can simply change  initially the coordinate system of the points clouds using any rotation $\MR_n$ so that $\MR_n \vn^* = [0, 0, 1]^\transposed$ holds. In this coordinate system, the set of fixed axis rotations $\mathcal{R}$ is simply the set of 2D rotations. 
	
	The method is a modification of the the well-known Kabsch-Umeyama algorithm \cite{Kabsch-1976-Point-set-alignment, Least-squares-estimation-point-sets-Umeyama-1991, sorkine2017least}.
	We first recall the following necessary stationary condition:
	\begin{equation}	
		\begin{aligned}
			\label{eq:ls-pcr-stationary-condition-translation}
			\vt^* = \bar{\vq} -  \MR^* \bar{\vp}
		\end{aligned}
	\end{equation}
	
	Since this stationary condition as well as primal feasability are both necessary for global optimality, so is the following condition necessary:
	
	\begin{equation}	
		\begin{aligned}
			\label{eq:stationary-cond-rot-ball}
			&\norm{\vc - \vt^*}  \leq n_t \\
			\implies &\norm{\vt^* - \vc} = \norm{\bar{\vq} -  \MR^* \bar{\vp} - \vc} \leq n_t
		\end{aligned}
	\end{equation}
	
	The set of 2D rotations that satisfies this condition  (\ref{eq:stationary-cond-rot-ball})
	
	\begin{equation}	
		\begin{aligned}
			\hat{B_r} = \{ \MR \in \SOtwo : \norm{\bar{\vq} -  \MR \bar{\vp} - \vc} \leq n_t \}
		\end{aligned}
	\end{equation}
	
	is a rotation ball (in this case an interval on the unit circle).
	The global minimizer must therefore satisfy the constraint $\hat{B_r} \cap B_r$. 
	By changing the rotation ball constraint from the original ball $B_r$ to the intersection $\hat{B_r} \cap B_r$, we can
	therefore satisfy the translation ball constraint as well. This effectively eliminates the translation ball constraint. 
	
	The remaining part is based on the same arguments as for the original Kabsch-Umeyama algorithm. 

\end{proof}
	Problem (\ref{eq:wls-pose-simp}) is a ball-constrained rotation problem, that is problem (\ref{eq:tls-rwls}).
	
\section{Contractor}
\label{proof:contractor}
Proof of Theorem \ref{thm:so2-contractor}.

\begin{proof}
	It suffices to prove each term separately, i.e. $B_t \times \SOtwo \cap \mathcal{V}_i = B_t \times a^i \cap \mathcal{V}_i$ for all i $\in \RangeOneToN$.
	Note that the set $\mathcal{V}_i$ is the set of solutions of an (polynomial) inequality system.
	The idea of the proof is to show that this inequality only has solutions for the rotation ball $a_i$ given the translation ball $B_t$.
	For this, we first eliminate the variable $\vt$ by inserting $\vt = \vt_c + \Delta \vt$: 
	\begin{equation}
		\begin{aligned}
			\label{eq:v-set-over-r-given-t-ball}
	 	&\normsq{\MR \vp_i  - \vq_i  + \vt_c + \Delta \vt} \leq \epssq \land \norm{\Delta \vt} \leq t_r\\
 	\iff &\normsq{\MR \vp_i  - \vq_i  + \vt_c} \leq (\epsilon + t_r)^2 
		\end{aligned}
	\end{equation}
	We show the claim by simply showing that this inequality has the solution set $a_i$ over $\MR \in \SOtwo$.
	We first rearrange with $\vb_i = \vq_i  - \vt_c$:
	
	\begin{equation}
		\begin{aligned}
			&\normsq{\MR \vp_i  - \vq_i  + \vt_c} \leq (\epsilon + t_r)^2 \\
		\iff &\normsq{\MR \vp_i  - \left( \vq_i  - \vt_c \right)} \leq (\epsilon + t_r)^2 \\
		\iff &\normsq{\vp_i} + \normsq{\vb_i}  - 2 \left( \MR  \vp_i \right)^\transposed \vb_i \leq (\epsilon + t_r)^2 \\
		\iff &\underbrace{ \frac{\normsq{\vp_i} + \normsq{\vb_i} - (\epsilon + t_r)^2}{2 \norm{\vp_i} \norm{\vb_i}} }_{h_i} \leq \cos(\angle(\MR\vp_i, \vb_i))\\
		\iff &h_i \leq \cos(\angle(\MR\vp_i, \vb_i))
		\end{aligned}
	\end{equation}
	Now since $\cos(\cdot) \in [-1, 1]$, we can write equivalently: 
	\begin{equation}
		\begin{aligned}
			\label{eq:v-set-over-r-given-t-ball-simp}
		 			&h_i \leq \cos(\angle(\MR\vp_i, \vb_i))\\
			\iff &\min(1, \max(-1, h_i)) \leq  \cos(\angle(\MR\vp_i, \vb_i)) \\
			\iff &\angle(\MR\vp_i, \vb_i) \leq \underbrace{ \arccos(\min(1, \max(-1, h_i))) }_{r_r^i} \\
			\iff &\angle(\MR\vp_i, \vb_i) \leq r_r^i
		\end{aligned}
	\end{equation}
	Therefore, the solution set of Eq. \ref{eq:v-set-over-r-given-t-ball} is the same as of Eq. \ref{eq:v-set-over-r-given-t-ball-simp}.
	We define the center of a rotation ball $R_c^i = \angle(\vp_i, \vb_i)$. 
	We define the ball of rotations $a^i = \{ \MR \in \SOtwo : d_{\angle}(\MR, R_c^i) \leq r_r^i\}$. We see that $a^i$ is the solution set of Eq. \ref{eq:v-set-over-r-given-t-ball-simp}, concluding the proof.
	
\end{proof}

\section{Computing the root box}
\label{proof:root-box}

	The globally optimal translation of the TLS problem \ref{eq:pcr-tls} is contained in a box with the corner points 
	$\vu = (u_1,u_2,u_3)^\top,\quad \vv=(v_1,v_2,v_3)^\top$, that are defined given the two point clouds 
	$\MP = \bigl[\vp_1,\dots,\vp_N\bigr]\in \RthreeByN$ and $\MQ = \bigl[\vq_1,\dots,\vq_N\bigr]\in \RthreeByN$ as:
	
	\begin{equation}
		\begin{aligned}
			u_i &= \min_{1\le j\le N}\bigl(q_{ij}-\|\vp_j\|\bigr)\;-\; \epsilon \\	
			v_i &= \max_{1\le j\le N}\bigl(q_{ij}+\|\vp_j\|\bigr)\;+\; \epsilon
		\end{aligned}
	\end{equation}

\end{appendix}

\end{document}